\documentclass{article}

\usepackage{microtype}
\usepackage{graphicx}
\usepackage{subcaption}
\usepackage{booktabs} 
\usepackage{multirow}
\usepackage{hyperref}
\usepackage{titletoc}

\usepackage[accepted]{icml2026}

\usepackage{amsmath}
\usepackage{amssymb}
\usepackage{mathtools}
\usepackage{amsthm}

\usepackage{booktabs}   
\usepackage{multirow}   
\usepackage{array}      
\usepackage{caption}    
\usepackage{tabularx}   
\usepackage{multirow}   
\usepackage{array}      
\usepackage{caption}    
\usepackage{tabularx}   
\newcolumntype{Y}{>{\centering\arraybackslash}X}
\usepackage[table]{xcolor}
\usepackage[capitalize,noabbrev]{cleveref}

\theoremstyle{plain}
\newtheorem{theorem}{Theorem}[section]
\newtheorem{proposition}[theorem]{Proposition}

\theoremstyle{definition}
\newtheorem{definition}[theorem]{Definition}
\newtheorem{assumption}[theorem]{Assumption}
\theoremstyle{remark}
\newtheorem{remark}[theorem]{Remark}
\newcommand{\best}[1]{\textbf{#1}}
\usepackage[textsize=tiny]{todonotes}

\icmltitlerunning{FocalPolicy: Frequency-Optimized Chunking and Locally Anchored 
Flow Matching for Coherent Visuomotor Policy}

\begin{document}

\twocolumn[
  \icmltitle{
FocalPolicy: Frequency-Optimized Chunking and Locally Anchored \\
Flow Matching for Coherent Visuomotor Policy
           }




  \icmlsetsymbol{equal}{*}

  \begin{icmlauthorlist}
    \icmlauthor{Qian He}{equal,1,2}
    \icmlauthor{Zhenshuo Yang}{equal,1,2}
    \icmlauthor{Wenqi Liang}{3}
    \icmlauthor{Chunhui Hao}{1}
    \icmlauthor{Nicu Sebe}{3}
    \icmlauthor{Jiandong Tian}{1}
  \end{icmlauthorlist}
  \icmlaffiliation{1}{State Key Laboratory of Robotics and Intelligent Systems, Shenyang Institute of Automation, Chinese Academy of Sciences}
  \icmlaffiliation{2}{University of the Chinese Academy of Sciences}
  \icmlaffiliation{3}{University of Trento}

  \icmlcorrespondingauthor{Jiandong Tian}{tianjd@sia.cn}

  \icmlkeywords{Machine Learning, ICML}

  \vskip 0.3in
]



\printAffiliationsAndNotice{\icmlEqualContribution}

\begin{abstract}


Visuomotor policies aim to learn complex manipulation tasks from expert demonstrations. 
However, generating smooth and coherent trajectories remains challenging, as it requires balancing proximal precision with distal foresight.
Existing approaches typically focus on optimizing intra-chunk action distributions, often neglecting the inter-chunk coherence.
Consequently, inter-chunk discontinuities significantly impede the learning of coherent long-horizon actions.
To overcome this limitation and achieve a synergetic balance between precision and foresight, we propose FocalPolicy, a foresight-aware visuomotor policy that combines \textbf{F}requency-\textbf{O}ptimized \textbf{C}hunking with \textbf{L}ocally \textbf{A}nchored flow matching.
We introduce a foresight composite objective that supervises time-domain alignment within the proximal actions while regularizing frequency-domain structure over multiple future action chunks to improve cross-chunk coherence.
To efficiently learn complex action distributions, we design locally anchored sampling to enhance target signal propagation efficiency during consistency flow matching training.
Extensive experiments demonstrate that FocalPolicy outperforms existing approaches and confirm the generalizability of our modules to other baselines.
Project website: \href{https://focalpolicy.github.io/}{https://focalpolicy.github.io/}.

\end{abstract}


\section{Introduction}



\begin{figure*}[!t]
    \centering
    \includegraphics[width=1.0\linewidth]{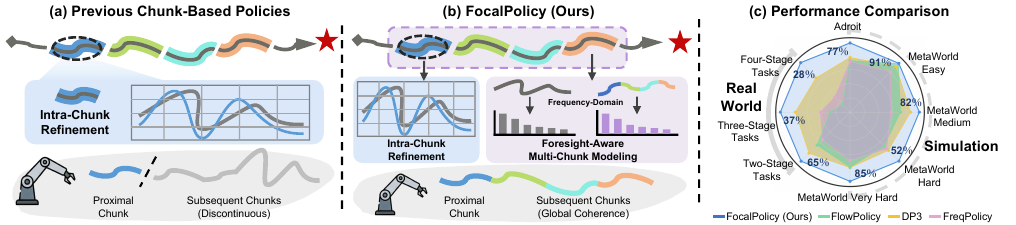
    }
\caption{Comparison with chunk-based baselines.
Unlike previous approaches (a) that prioritize intra-chunk refinement but overlook \textit{inter-chunk discontinuities}, FocalPolicy (b) employs a \textit{Foresight Composite Objective (FCO)} to synergize \textit{proximal precision} with \textit{distal coherence} across chunks. 
Extensive experiments (c) demonstrate that FocalPolicy outperforms state-of-the-art baselines.}

    \label{fig:focal_motivation}
\end{figure*}

Recently, imitation learning-based visuomotor policies have made significant progress in robotic manipulation~\cite{florence2022implicit, gong2025carp}.
To mitigate compounding errors in imitation learning~\cite{tu2022sample}, action chunking has been adopted~\cite{lai2022action}, allowing the policy to infer a multi-step action chunk per inference, rather than a single-step action~\cite{zhao2023learning, zhang2025action}.
Moreover, recent research emphasizes that accelerating the generation of action chunks is crucial for deploying visuomotor policies in the real world~\cite{hu2024adaflow,su2025dense}. 
Although effective, ensuring high-quality generation still necessitates a small number of iterative steps per inference, hindering real-world deployment. 
Consequently, efficiently generating action trajectories with minimal compounding errors remains a major bottleneck~\cite{kim24openvla}, which directly dictates the coherence and deployability of imitation learning policies~\cite{park2025acg}.

To address this issue, recent work~\cite{song2023consistency,improved} aims to improve the accuracy of the trajectory in one-step generation. 
Prominent approaches achieve high-fidelity sampling through self-consistency constraints~\cite{cp,lu2024manicm} or consistency flow matching~\cite{zhang2025flowpolicy}. 
Subsequent frameworks further improve performance by enhancing distributional fidelity~\cite{wang2024one,jia2024score} or frequency-domain supervision~\cite{su2025freqpolicy,zhong2025freqpolicy}. 
As illustrated in Figure~\ref{fig:focal_motivation}(a), these methods focus mainly on optimizing intra-chunk distributions, often neglecting inter-chunk discontinuities~\cite{black2025real}, which leads to incoherence in subsequent action trajectories.

To mitigate the discontinuities between chunks, an intuitive approach is to increase the inference horizon.
Nevertheless, directly fitting the more complex action distribution introduced by longer chunks leads to policy degradation.
Therefore, this necessitates a foresight-aware policy that can simultaneously plan fine-grained immediate actions and coarse subsequent trajectories.
We argue that {\textit{learning manipulation trajectories with a differentiated focus is crucial, as it enables the policy to not only finely align the fine-grained actions of proximal chunks but also attend to the coherence of subsequent chunks,} thereby bridging the inter-chunk discontinuities.
To this end, we propose FocalPolicy, a foresight-aware visuomotor policy that combines Frequency-Optimized Chunking with Locally Anchored flow matching for the efficient generation of long-horizon actions, as illustrated in Figure~\ref{fig:focal_motivation}(b).
We introduce a \textbf{Foresight Composite Objective} that supervises time-domain alignment within the proximal actions while regularizing frequency-domain structure over multiple future action chunks to improve cross-chunk coherence.
To overcome the complex action distributions within multi-chunk trajectories, we design \textbf{Locally Anchored Sampling} to enhance target signal propagation efficiency during training.

\textbf{Foresight Composite Objective (FCO).} 
While time-domain supervision enables precise local action alignment, frequency-domain features excel at capturing the global spatiotemporal relationships of action trajectories. 
Previous works~\cite{su2025freqpolicy,zhong2025freqpolicy} have leveraged this synergy to significantly enhance the learning efficiency of single-chunk action distributions. 
However, the overlooked discontinuities in inter-chunk action distributions remain a major bottleneck for learning coherent manipulation trajectories.
To bridge inter-chunk discontinuities, we propose the Foresight Composite Objective (FCO), which combines intra-chunk refinement in the proximal time-domain with foresight-aware multi-chunk modeling in frequency-domain, as illustrated in Figure~\ref{fig:focal_motivation}. 
Unlike existing methods~\cite{black2025real, park2025acg, liu2025bidirectional} that regulate trajectory coherence by modifying the inference schemes of pre-trained models, FCO inherently strengthens the learning of action coherence at the fundamental level of policy training.
As a generic and effective module, FCO can be seamlessly integrated into various action chunking-based policies.

\textbf{Locally Anchored Sampling (LAS).}
Consistency flow matching enforces consistency constraints on adjacent time points randomly sampled during training, which often weakens the propagation efficiency of target signals.
This represents a key bottleneck in improving training efficiency when modeling the complex action distributions of multi-chunk trajectories. 
To address this, we design Locally Anchored Sampling (LAS), which anchors one sampling time point to a logit-normal distribution concentrated near the terminal time during training.
This strategy reinforces direct supervision of the target distribution while preserving stochastic robustness, thereby significantly enhancing propagation efficiency and stabilizing training. 
Crucially, LAS acts as a general optimization booster that is broadly applicable to other consistency flow-based visuomotor policies.

We summarize the contributions of our work as follows:
\begin{itemize} 
\item We propose FocalPolicy, a foresight-aware policy that explicitly addresses inter-chunk discontinuities in action generation via differentiated focus learning. 
\item We introduce the Foresight Composite Objective (FCO) to synergize proximal precision with distal coherence, serving as a versatile module applicable to various action chunking-based policies. 
\item We design Locally Anchored Sampling (LAS) to stabilize training for general consistency flow-based policies. Extensive experiments confirm the state-of-the-art performance of FocalPolicy and the effective generalizability of both FCO and LAS. 
\end{itemize}
\begin{figure*}[!t]
    \centering
    \includegraphics[width=1.0\linewidth]{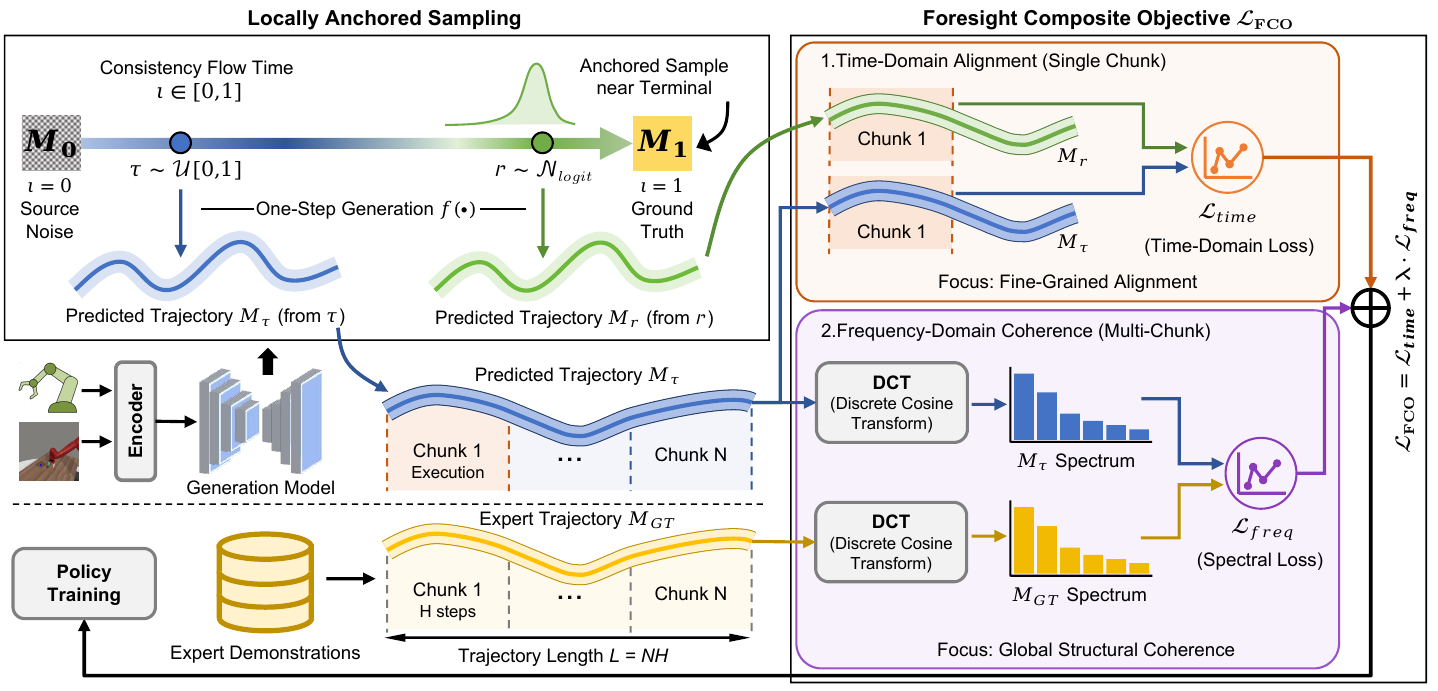}

    \caption{The pipeline of FocalPolicy. 
We propose \textit{Locally Anchored Sampling (LAS)} to improve the training efficiency of consistency flow matching. 
The policy is optimized via a \textit{Foresight Composite Objective (FCO)}, which synergizes proximal precision (via time-domain loss) with distal coherence (via frequency-domain loss) to generate smooth multi-chunk trajectories.}
    \label{fig:pipeline}

\end{figure*}

\section{Related Work}

\subsection{Spatiotemporal Awareness for Visuomotor Policies}

Understanding spatiotemporal relationships is crucial for robust robotic manipulation. 
Recent studies have enhanced this awareness through feature encoding and trajectory modeling~\cite{liang2024never, ke20253d, wang2025flowram, tian2025pdfactor, lv2025spatial, su2025motion, liang2025pixelvla, wang2025skil, dong2026learning}. 
At the representation level, DP4~\cite{liu2025spatial} incorporates a 3D Gaussian Splatting (3DGS) world model to guide the encoder in capturing spatiotemporal dynamics. 
Complementarily, recent work~\cite{zhong2025freqpolicy} leverages frequency-domain analysis to structurally decouple global motion patterns from local details.
Beyond feature-centric approaches, directly modeling object trajectories has proven effective~\cite{wen2023any,yuan2025general,bharadhwaj2024track2act,xu2025flow}. For instance, ORCA~\cite{huey2025imitationlearningsingletemporally} employs a dense per-timestep reward to enable sequence-level spatiotemporal matching. 
Furthermore, spatiotemporal modeling facilitates inference acceleration; Dense Policy~\cite{su2025dense} proposes keyframe-based trajectory completion, significantly improving efficiency while maintaining manipulation performance.

\subsection{Action Chunking-based Manipulation Policies}

Action chunking~\cite{zhao2023learning} mitigates compounding errors by predicting multi-step actions in a single pass, a paradigm now widely adopted in diffusion and flow-based policies. 
To ensure coherent real-time manipulation, RTC~\cite{black2025real} proposes an inference-time algorithm to overcome the discontinuities between action chunks.
Recently, ManiCM~\cite{lu2024manicm} and FlowPolicy~\cite{zhang2025flowpolicy} have advanced one-step inference by utilizing consistency distillation and consistency flow matching (CFM)~\cite{yang2024consistency}, respectively.
Building on this, frequency consistency constraints are integrated into consistency flow policies~\cite{su2025freqpolicy}, further strengthening the temporal correlation of action trajectories.
While effective, the aforementioned approaches prioritize intra-chunk fidelity yet overlook inter-chunk discontinuities in long-horizon execution. 
To bridge this gap, we shift from single-chunk refinement to foresight-aware multi-chunk modeling, ensuring near-term precision and long-term coherence.
\section{Preliminaries}
\label{sec:preliminaries}


Flow Matching (FM)~\cite{lipmanflow} models a $d$-dimensional data distribution by learning a time-dependent vector field $v_\theta(\mathbf{x},\tau):\mathbb{R}^d\times[0,1]\to\mathbb{R}^d$, which defines an ODE that transports samples from the noise distribution $p_0$ (typically Gaussian) to the target distribution $p_1$ at $\tau=1$. FM trains $v_\theta$ to match the conditional target field $u_\tau(\mathbf{x}_\tau\mid \mathbf{x}_1)$ given data samples $\mathbf{x}_1$. Under the Optimal Transport (OT) path~\cite{amos2023meta} $\mathbf{x}_\tau=(1-\tau)\mathbf{x}_0+\tau\mathbf{x}_1$, the target field reduces to $u_\tau(\mathbf{x}_\tau\mid \mathbf{x}_1)=\mathbf{x}_1-\mathbf{x}_0$. Accordingly, the FM objective is formulated as follows:
\begin{equation}
    \label{eq:condtional flow matching}
    \mathcal{L}_{\text{FM}}(\theta) = \mathbb{E}_{\mathbf{x},\tau}\| v_\theta(\mathbf{x}_\tau, \tau) - u_\tau(\mathbf{x}_\tau | \mathbf{x}_1) \|^2.
\end{equation}
Although FM is effective, the iterative ODE integration required at inference time incurs a high number of function evaluations (NFE), hindering real-time deployment.

To enable one-step generation, Consistency Flow Matching (CFM)~\citep{yang2024consistency} enforces straight flows~\citep{pmlr-v202-lee23j} via velocity consistency, $v_\theta(\mathbf{x}_\tau, \tau, c) \equiv v_\theta(\mathbf{x}_0, 0, c)$, where $c$ denotes the condition. Accordingly, CFM enforces this straight-line behavior by minimizing a self-consistency loss between predictions at neighboring flow times $\tau$ and $\tau + \Delta \tau$, resulting in the following training objective:
\begin{equation}
\label{eq:consistency_loss}
\begin{split}
    \mathcal{L}_{\text{CFM}}(\theta) & = \mathbb{E}_{\mathbf{x},\tau, c} \Big[ \| f_\theta(\mathbf{x}_\tau, \tau, c) \\
    & - f_{\theta^-}(\mathbf{x}_{\tau + \Delta \tau}, \tau + \Delta \tau, c) \|^2 \\
    & + \alpha \| v_\theta(\mathbf{x}_\tau, \tau, c) - v_{\theta^-}(\mathbf{x}_{\tau + \Delta \tau}, \tau + \Delta \tau, c) \|^2 \Big],
\end{split}
\end{equation}
where $f_\theta(\mathbf{x}_\tau, \tau, c) = \mathbf{x}_\tau + (1-\tau)v_\theta(\mathbf{x}_\tau, \tau, c)$, $\theta^-$ denotes the network parameters updated via exponential moving average (EMA), and $\alpha > 0$ balances the loss terms. Under this constraint, CFM realizes straight-line noise-to-data transport, enabling one-step generation at inference time.

FlowPolicy~\citep{zhang2025flowpolicy} first introduces the consistency flow-based framework into robotic imitation learning to accelerate action inference. At each environment timestep $t$, the policy aims to generate an action chunk $\mathbf{A}_t = [\mathbf{a}_t, \mathbf{a}_{t+1}, \dots, \mathbf{a}_{t+H-1}] \in \mathbb{R}^{H \times d}$, representing $H$ future $d$-dimensional actions. Accordingly, the flow state $\mathbf{x}_\tau$ in Eq.~\eqref{eq:consistency_loss} is defined as the noisy action chunk $\mathbf{A}_\tau$ at flow time $\tau$, conditioned on the current observation $o_t$. The training objective is adapted as:
\begin{equation}
\label{eq:flowpolicy_loss}
\begin{split}
    \mathcal{L}_{\text{FP}}(\theta) & = \mathbb{E}_{\mathbf{A}_{\tau},\tau, o_t} \Big[ \| f_\theta(\mathbf{A}_{\tau}, \tau, o_t) \\
    & - f_{\theta^-}(\mathbf{A}_{\tau + \Delta \tau}, \tau + \Delta \tau, o_t) \|^2 \\
    & + \alpha \| v_\theta(\mathbf{A}_{\tau}, \tau, o_t) - v_{\theta^-}(\mathbf{A}_{\tau + \Delta \tau}, \tau + \Delta \tau, o_t) \|^2 \Big].
\end{split}
\end{equation}
As shown in Eq.~\eqref{eq:flowpolicy_loss}, it relies solely on self-consistency between adjacent time steps within an action chunk. This supervision lacks explicit guidance from the expert trajectory, making it difficult to perceive long-horizon motion trends and model complex multi-chunk action distributions.

\section{Methodology}
\label{sec:method}
In this work, we propose \textbf{FocalPolicy}, a foresight-aware visuomotor policy as illustrated in Figure~\ref{fig:pipeline}. 
Unlike prior works that focus on time-domain optimization within a single action chunk, FocalPolicy introduces a \textbf{Foresight Composite Objective (FCO)}, an optimization target that integrates proximal time-domain alignment with multi-chunk frequency-domain structural regularization (Sec.~\ref{subsec:LHF_reg}). 
Furthermore, to enhance the target-signal propagation efficiency during training, we design a \textbf{Locally Anchored Sampling (LAS)} strategy (Sec.~\ref{subsec:LA_policy}). 

\subsection{Foresight Composite Objective}
\label{subsec:LHF_reg}
To achieve seamless long-horizon execution, we identify prioritized learning as a key necessity: the policy should ensure fine-grained fidelity for proximal actions while maintaining coarse structural coherence for distal trajectories. We therefore propose the Foresight Composite Objective (FCO). Specifically, we explicitly concatenate $N$ consecutive expert action chunks into a macro-trajectory of horizon length $L=NH$, denoted as $\mathbf{M}_t \in \mathbb{R}^{L \times d}$:
\begin{equation}
\label{4.1}
\mathbf{M}_t = \text{Concat}\big(\mathbf{A}_t, \mathbf{A}_{t+H}, \dots, \mathbf{A}_{t+(N-1)H} \big).
\end{equation}
Moreover, prior work~\cite{zhang2025flowpolicy} indicates that low-frequency components encode global motion trends, while high-frequency components represent fine-grained details.
These properties align well with our foresight requirement.
Motivated by this observation, we apply the Discrete Cosine Transform (DCT)~\cite{DCT} to project macro-trajectories into the frequency domain.
For each action dimension $d$, the DCT coefficients $\mathcal{C}_d \in \mathbb{R}^{L}$ are computed as follows:
\begin{equation}
\label{4.2}
\mathcal{C}_d[u] = c_u \sum_{i=0}^{L-1} \mathbf{M}_t[i, d] \cos\left( \frac{\pi}{L} (i + 0.5) u \right),
\end{equation}
where $u = 0, \dots, L-1$ is the frequency index, and $c_u$ are normalization constants with $c_0=\sqrt{1/L}$ and $c_u=\sqrt{2/L}$ for $u>0$. Eq.~\eqref{4.2} further shows that each DCT coefficient aggregates information across all timesteps, providing a global view of the macro-trajectory. 


\begin{figure}[!t]
    \centering
    \includegraphics[width=1\linewidth]{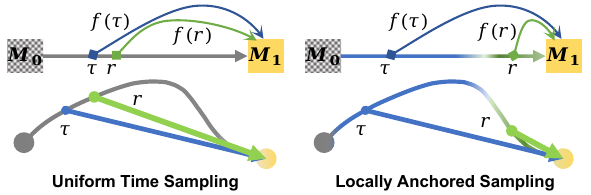}
    \caption{Time sampling comparison. Left: Standard uniform sampling draws $(\tau, r)$ uniformly along the flow trajectory, which can attenuate target-signal propagation for early $\tau$. Right: Our locally anchored sampling biases $r$ toward the terminal region to strengthen target-signal propagation.}
    \label{fig:temporal_sampling}
\end{figure}

To model both distal motion trends and proximal execution details, FCO combines intra-chunk refinement in the time domain with foresight-aware multi-chunk modeling in the frequency domain. Concretely, we first define the multi-chunk frequency-domain loss $\mathcal{L}_{\text{freq}}$ as the squared $L_2$ distance between the DCT coefficients of the predicted trajectory $\hat{\mathbf{M}}_t$ and the expert trajectory $\mathbf{M}_t$:
\begin{equation}
\label{4.3}
\mathcal{L}_{\text{freq}} = \mathbb{E} \left\| \mathrm{DCT}(\hat{\mathbf{M}}_t) - \mathrm{DCT}(\mathbf{M}_t) \right\|^2_2,
\end{equation}
where $\mathrm{DCT}(\cdot)$ denotes the DCT operator defined in Eq.~\eqref{4.2}. 
The final Foresight Composite Objective, denoted as $\mathcal{L}_{\text{FCO}}$, is then formulated as a weighted combination of the proximal single-chunk time-domain consistency loss $\mathcal{L}_{\text{time}}$ (Eq.~\eqref{4.9}) and the multi-chunk frequency-domain loss $\mathcal{L}_{\text{freq}}$:
\begin{equation}
\label{4.10}
\mathcal{L}_{\text{FCO}} = \mathcal{L}_{\text{time}} + \lambda \cdot \mathcal{L}_{\text{freq}}.
\end{equation}
Here, $\lambda=10^{-4}$ is a balancing coefficient chosen to equilibrate the relative scales of the time-domain and frequency-domain loss components. This hybrid objective supervises time-domain alignment within proximal actions while regularizing frequency-domain structure over multiple future action chunks to improve cross-chunk coherence.

Crucially, although $\mathcal{L}_{\text{freq}}$ is defined in the frequency domain, Parseval's theorem~\cite{rao2014discrete} ensures that its error signal is consistent with the time-domain $\ell_2$ objective. In FCO, $\mathcal{L}_{\text{time}}$ supervises only a single chunk of length $H$, while $\mathcal{L}_{\text{freq}}$ supervises multiple chunks spanning a length of $L$, and the complete task trajectory is significantly longer than $L$. Under this formulation, the resulting network-parameter gradients remain non-conflicting:
\begin{equation}
\label{4.4}
\left\langle \nabla_{\theta}\mathcal{L}_{\text{freq}},\ \nabla_{\theta}\mathcal{L}_{\text{time}}^{(H)} \right\rangle \;>\; 0.
\end{equation}
Thus, optimizing $\mathcal{L}_{\text{freq}}$ does not introduce conflicting optimization directions. A detailed derivation and further discussion on spectral sensitivity, along with the complete training process of FocalPolicy, are provided in the Appendix~\ref{sec:theoretical_analysis_1}.

\subsection{Locally Anchored Sampling}
\label{subsec:LA_policy}
While FCO effectively strengthens global coherence, it requires modeling the macro-trajectory, which greatly increases distributional complexity. As discussed in Sec.~\ref{sec:preliminaries}, {FlowPolicy} imposes consistency constraints between adjacent random time points. However, this often weakens the target-signal propagation efficiency, hindering the learning of complex distributions.
\begin{definition}
\label{4.5}
\textit{To quantify the effectiveness of information flow during training, we define the \textit{Target-signal Propagation Efficiency} $\mathcal{E}(\tau)$ as the fidelity of the consistency gradient $g_{\text{cons}}(\tau, r)$ relative to the ideal supervised gradient $g_{\text{sup}}(\tau)$:}
\begin{equation}
\label{eq:efficiency_def}
\mathcal{E}(\tau) := - \mathbb{E}_{r \,|\, \tau}\left[ \left\| g_{\text{cons}}(\tau, r) - g_{\text{sup}}(\tau) \right\|^2 \right].
\end{equation}
\end{definition}
\textbf{Remark \ref{4.5}.} Under standard uniform sampling, the high variance of intermediate targets causes $\mathcal{E}(\tau)$ to decay rapidly as $\tau$ moves away from the terminal state, leading to suboptimal target distribution learning.

To improve the target-signal propagation efficiency, we propose {Locally Anchored Sampling} (LAS). Figure~\ref{fig:temporal_sampling} illustrates the difference between standard uniform sampling and our locally anchored strategy, which biases the anchor time toward the terminal region. Specifically, we sample a flow time $\tau \sim \mathcal{U}[0,1]$ and an anchor time $r$ from a logit-normal distribution. We formulate $r$ as:
\begin{equation}
\label{4.6}
r = \mathrm{sigmoid}(\mu_r + \sigma_r \epsilon), \quad \epsilon \sim \mathcal{N}(0,1),
\end{equation}
where $\mu_r,\sigma_r$ control the anchor bias. By biasing $r$ toward the terminal boundary, we effectively strengthen the target-signal propagation efficiency in consistency training. 
However, rather than fixing $r$ to the terminal time, we keep it stochastic to stabilize training and preserve temporal continuity along the flow. 
In practice, we set $\mu=4.0$ and $\sigma=1.6$, mapping to a median time of $r \approx 0.982$ with a heavy left tail, which effectively provides the desired near-terminal supervision.

Following the Optimal Transport (OT) construction, we define the interpolated states at times $\tau$ and $r$ as follows:
\begin{align}
\label{4.7}
\mathbf{M}_{\tau} &= (1-\tau)\mathbf{M}_0 + \tau \mathbf{M}_1, \\
\mathbf{M}_{r} &= (1-r)\mathbf{M}_0 + r \mathbf{M}_1,
\end{align}
where $\mathbf{M}_0$ is a Gaussian noise trajectory, and $\mathbf{M}_1$ corresponds to the target expert macro-trajectory $\mathbf{M}_t$.
Conditioned on observation $o_t$, the model predicts velocity fields $v_\theta(\mathbf{M}_{\tau}, \tau, o_t)$ and $v_\theta(\mathbf{M}_{r}, r, o_t)$. We define the model predictions of the target action trajectory from any given flow state as follows:
\begin{align}
\label{4.8}
f_\theta(\mathbf{M}_{\tau'}, \tau', o_t) = \mathbf{M}_{\tau'} + (1 - \tau') v_\theta(\mathbf{M}_{\tau'}, \tau', o_t),
\end{align}
where ${\tau'} \in \{\tau, r\}$. To enforce the straight-flow property, we impose a consistency constraint between the predictions at the uniform point $\tau$ and the anchored point $r$. The time-domain consistency loss is defined as:
\begin{equation}
\label{4.9}
\mathcal{L}_{\text{time}} = \mathbb{E}_{\tau,r}\left[ \left\| [ f_\theta(\mathbf{M}_{\tau}, \tau, o_t) - f_{\theta^-}(\mathbf{M}_{r}, r, o_t) ]_{1:H} \right\|_2^2 \right].
\end{equation}
where the subscript $1:H$ indicates that the consistency loss is computed exclusively on the first $H$ timesteps (\textit{i.e.}, the proximal single chunk) of the predicted macro-trajectory.
By assuming monotonically decreasing error near the terminal boundary, we mathematically demonstrate that our locally anchored sampling strategy achieves a strictly superior propagation efficiency $\mathcal{E}(\tau)$ compared to the standard random sampling strategy. Detailed theoretical derivations and proofs are provided in Appendix~\ref{sec:theoretical_analysis_2}.

\section{Experiments}
\label{exper}

This section presents a comprehensive evaluation of FocalPolicy.
We first detail the setup, baselines, and metric.
Next, we demonstrate the superiority of FocalPolicy and validate the generalizability and distinct contributions of FCO and LAS.
Finally, we conduct extensive ablation studies on 24 tasks to verify the effectiveness of our design choices.

\subsection{Experiments Setup}

\begin{table*}[t]
\caption{Main results in simulation. We report the number of function evaluations (NFE) and success rates (\%) for each method across benchmarks. Averages computed over \textbf{53} tasks.
\textbf{*} indicates results reproduced using the same expert demonstrations for fair comparison.}

\label{tab:sim_main_results}
\begin{center}
\begin{small}
\setlength{\tabcolsep}{5pt}
\begin{tabular*}{\linewidth}{@{\extracolsep{\fill}}lccccccc}
\toprule
\multicolumn{1}{c}{\multirow{2}{*}{\textbf{Method \textbackslash Task}}} & \multirow{2}{*}{NFE} & Adroit & MetaWorld & MetaWorld & MetaWorld & MetaWorld & \multirow{2}{*}{\textbf{Average}} \\
 & & (3) & Easy (28) & Medium (11) & Hard (6) & Very Hard (5) &  \\
\midrule
DP        & 10 & 31.7 & 83.6 & 31.1 & 9.0  & 26.6 & 55.9\\
MainCM    & 1  & 72.3 & 83.6 & 55.6 & 33.3 & 67.0 & 69.9\\
SDM       & 1  & 74.0 & 86.5 & 65.8 & 35.8 & 71.6 & 74.4\\
\midrule
DP3\textsuperscript{*}        & 10 & $70.4 \pm 3.2$ & $89.3 \pm 0.1$ & $77.2 \pm 2.7$ & $47.1 \pm 2.3$ & $78.1 \pm 0.9$ & $79.9$\\
FlowPolicy\textsuperscript{*}         & 1  & $69.0 \pm 2.6$ & $91.1 \pm 0.2$ & $71.9 \pm 1.2$ & $45.2 \pm 0.9$ & $77.9 \pm 0.6$ & $79.4$\\
FreqPolicy\textsuperscript{*} & -  & $68.6 \pm 0.8$ & $85.4 \pm 0.2$ & $66.6 \pm 2.0$ & $46.4 \pm 2.1$ & $74.4 \pm 2.4$ & $75.1$\\
\textbf{Ours}                 & 1  & $\textbf{76.9} \pm \textbf{2.2}$ & $\textbf{91.4} \pm \textbf{0.1}$ & $\textbf{81.9} \pm \textbf{1.6}$ & $\textbf{51.8} \pm \textbf{3.8}$ & $\textbf{85.1} \pm \textbf{1.6}$ & $\textbf{83.6}$\\
\bottomrule
\end{tabular*}
\end{small}
\end{center}
\end{table*}

\begin{table*}[t]
\caption{Comparing FocalPolicy with more baselines on 7 simulation tasks. We compare Mamba Policy, DP3, FlowPolicy, and their variants, termed DP3 w. FCO and FlowPolicy w. LAS.}

\label{tab:sim_specific_results}
\begin{center}
    \begin{small}
    \setlength{\tabcolsep}{1pt}
    \begin{tabular*}{\linewidth}{@{\extracolsep{\fill}}lcccccccc}
    \toprule
    \multicolumn{1}{c}{\multirow{2}{*}{\textbf{Method \textbackslash Task}}} & \multicolumn{3}{c}{Adroit} & \multicolumn{4}{c}{MetaWorld} & \multirow{2}{*}{\textbf{Average}} \\
    \cmidrule(lr){2-4} \cmidrule(lr){5-8}
    & Hammer & Door & Pen & Sweep-Into & Hand-Insert & Pick-Place & Disassemble & \\
    \midrule
    Mamba Policy\textsuperscript{*} & $99.7 \pm 0.6$ & $59.0 \pm 2.0$ & $58.3 \pm 3.8$ & $59.3 \pm 17.0$ & $15.3 \pm 4.5$ & $0.0 \pm 0.0$ & $79.7 \pm 4.7$ & $53.1 $\\
    FreqPolicy\textsuperscript{*}   & $94.3 \pm 2.1$ & $58.3 \pm 3.8$ & $53.0 \pm 3.5$ & $19.3 \pm 8.5$ & $16.0 \pm 3.0$ & $57.7 \pm 4.2$ & $85.3 \pm 1.5$ & $54.9 $\\
    \textbf{Ours}                   & $\textbf{100.0} \pm \textbf{0.0}$ & $\textbf{63.2} \pm \textbf{5.3}$ & $\textbf{67.5} \pm \textbf{1.3}$ & $\textbf{70.7} \pm \textbf{22.0}$ & $\textbf{29.3} \pm \textbf{16.2}$ & $\textbf{70.7} \pm \textbf{4.2}$ & $\textbf{86.7} \pm \textbf{1.5}$ & $\best{69.7} $\\
    \midrule
    DP3\textsuperscript{*}        & $98.3 \pm 2.9$ & $53.5 \pm 5.1$ & $59.3 \pm 3.5$ & $39.3 \pm 25.3$ & $14.3\pm 2.9$ & $60.7 \pm 11.9$ & $79.3 \pm 7.0$ & $ 57.8 $\\
    DP3 w. Ours                   & $98.3 \pm 1.5$ & $58.3 \pm 2.5$ & $59.7 \pm 2.1$ & $47.0 \pm 24.3$ & $15.3\pm 0.6$ & $61.7 \pm 9.3$ & $84.0 \pm 5.6$ &  60.6\\
    \midrule 
    FlowPolicy\textsuperscript{*} & $94.0 \pm 0.0$ & $53.0 \pm 6.2$ & $61.0 \pm 3.5$ & $37.3 \pm 31.8$ & $16.0 \pm 2.6$ & $55.3 \pm 2.3$ & $71.0 \pm 7.8$ & $55.4 $\\
    FlowPolicy w. Ours            & $98.7 \pm 2.3$ & $61.3 \pm 3.2$ & $62.7 \pm 4.5$ & $62.3 \pm 17.6$ & $16.5 \pm 0.7$ & $66.3 \pm 4.6$ & $75.0 \pm 6.0$ & $63.3 $\\
    
    \bottomrule
    \end{tabular*}
    \end{small}
\end{center}
\end{table*}
\begin{figure*}[!t]
    \centering
    \includegraphics[width=1.01\linewidth]{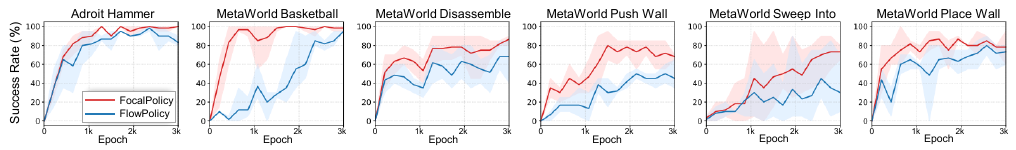}
    \caption{Learning efficiency. We report learning curves of FocalPolicy and FlowPolicy across six tasks. 
    Benefiting from higher target signal propagation efficiency during training, FocalPolicy converges faster to higher success rates than FlowPolicy.
    }
    \label{fig:sim_learning_efficiency}
\end{figure*}

\textbf{Simulation experiments.} 
To achieve a comprehensive evaluation across diverse robotic manipulation skills, we select 53 tasks from the Adroit~\cite{Adroit} and MetaWorld~\cite{metaworld} simulation platforms. The Adroit is primarily designed for evaluating dexterous hand manipulation tasks, while MetaWorld focuses on parallel gripper manipulation tasks. All tasks were constructed using simulators such as MuJoCo~\cite{mujoco} and SaPien~\cite{SaPien}. 
For expert demonstration data, comprising 10 trajectories of 200 steps per task, we employed well-trained heuristic policies to ensure the generation of high-quality datasets, with Adroit using the VRL3~\cite{wang2022vrl3} algorithm and MetaWorld employing the PPO~\cite{schulman2017proximal} algorithm.

\textbf{Real-world experiments.}
Referencing the multi-stage task design in \cite{liu-niu2025lbp, tian2024predictive}, we curate six long-horizon tasks to evaluate the real-world performance of FocalPolicy.
We categorize these tasks into three levels, ranging from two to four stages: Two-stage tasks: 1) \textbf{Water Pouring}. The robot pours water from a kettle into a cup and relocates the kettle.
2) \textbf{Drawer Loading}. The robot places an object into an open drawer and pushes it closed.
Three-stage tasks: 3) \textbf{Pot Loading}. The robot removes a pot lid, adds an ingredient, and recovers the pot.
4) \textbf{Tower Stacking}. The robot sequentially stacks three objects of varying shapes.
Four-stage tasks: 5) \textbf{Cup Matching}. 
The robot relocates three cups onto corresponding saucers. 
6) \textbf{Object Sorting}. 
The robot sorts four items into two bins. 
Figure~\ref{fig:rw_tasks} illustrates the setup and stages for representative tasks, with details in Appendix~\ref{realworld_details}.

\begin{figure*}[!t]
    \centering
    \includegraphics[width=1.0\linewidth]{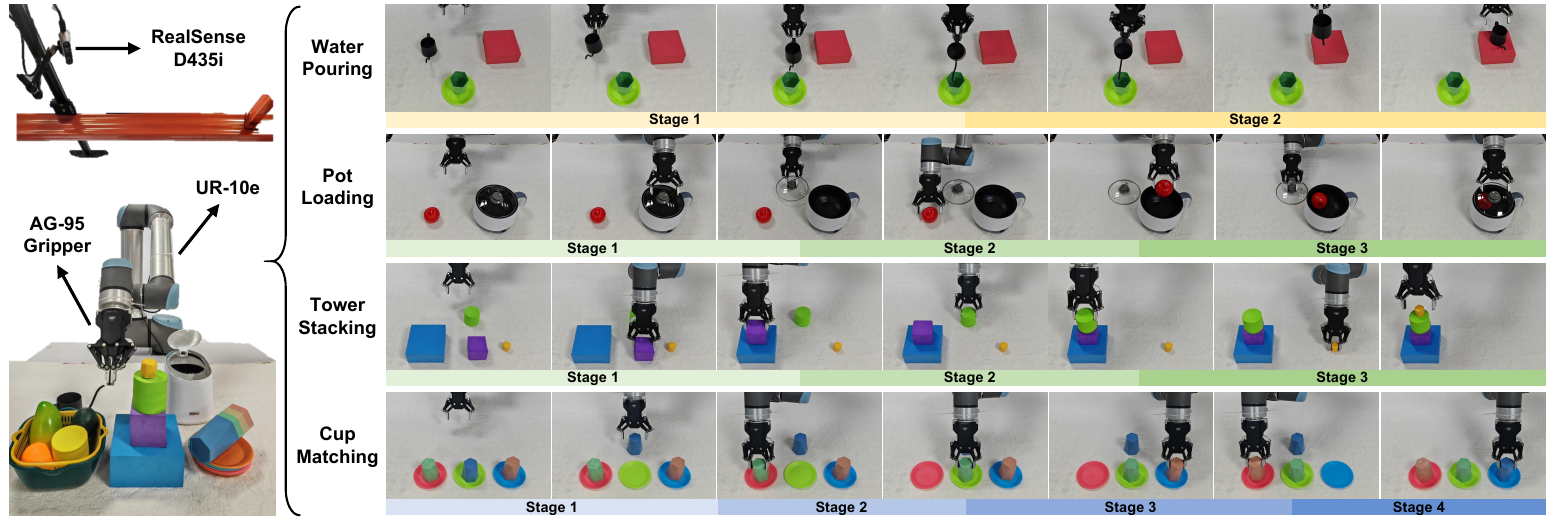}

    \caption{Real-world experimental setup and stage definitions. Left: Real-world experimental setup utilizing a UR-10e robotic arm equipped with an AG-95 gripper and a RealSense D435i camera. Right: Illustration of stage-wise definitions for four representative tasks.
}

    \label{fig:rw_tasks}

\end{figure*}

\begin{figure*}[!t]
    \centering
    \includegraphics[width=1\linewidth]{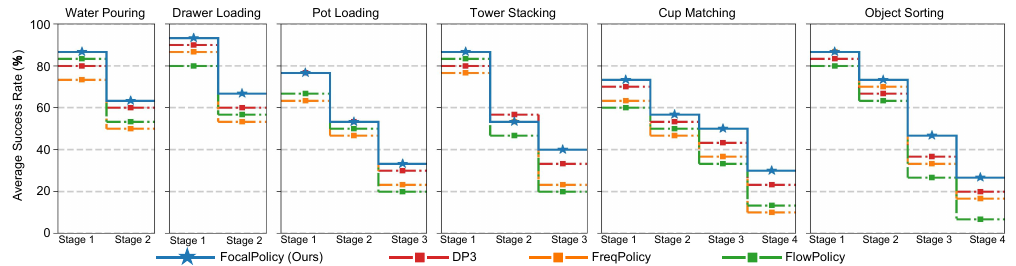}

    \caption{Real-world main results. We evaluate FocalPolicy, FlowPolicy, DP3, and FreqPolicy on six tasks. 
    We report the average success score at each stage, and the horizontal segments indicate the policy performance across different stages of each task.
    }

    \label{fig:rw_results}
\end{figure*}

\textbf{Baselines and Implementation Details.} We compare FocalPolicy against state-of-the-art visuomotor policies designed for 3D point-cloud observations and action chunking. Specifically, we re-evaluate \textbf{FlowPolicy}~\cite{zhang2025flowpolicy}, \textbf{DP3}~\cite{Ze2024DP3}, \textbf{Mamba Policy}~\cite{cao2024mamba} and \textbf{FreqPolicy}~\cite{zhong2025freqpolicy} on our datasets as primary baselines. For \textbf{DP}~\cite{chi2023diffusion}, \textbf{ManiCM}~\cite{lu2024manicm}, and \textbf{SDM}~\cite{jia2024score}, we include their reported metrics for comparison.
All baselines utilize original configurations, while for the parameters specific to FocalPolicy, we set the action chunk size $H=4$ and the number of aggregated chunks $N=3$. 
We adopt a consistent training and evaluation setup across all methods, with details provided in Appendix~\ref{hyperparameters}.

\textbf{Evaluation Metric.} 
Following the evaluation protocol in DP3, we evaluate each task across three runs using seeds 0, 1, and 2. For each seed,  we evaluate the policy 20 times every 200 epochs and report the average of the top five success rates, along with the mean and variance across seeds.

\subsection{Simulation Results}

\textbf{1) Main Results.} 
Table~\ref{tab:sim_main_results} summarizes the results on 53 tasks, where FocalPolicy reaches an average success rate of $83.6\%$, outperforming DP3 ($79.9\%$), FlowPolicy ($79.4\%$), and FreqPolicy ($75.1\%$).
Table~\ref{tab:sim_specific_results} further compares FocalPolicy with more baselines on 7 representative tasks. 
FocalPolicy achieves the highest success rate on all tasks, with an average of $69.7\%$. 
These results demonstrate the effectiveness of foresight-aware multi-chunk modeling.

\textbf{2) Module Generalizability.} 

To evaluate the generalizability of FCO and LAS, we directly integrate these modules into baseline models, denoted as DP3 w.\ Ours and FlowPolicy w.\ Ours.
As shown in Table~\ref{tab:sim_specific_results}, DP3 w.\ Ours ($60.6\%$) improves the average success rate by $2.8\%$ over DP3, while FlowPolicy w.\ Ours ($63.3\%$) surpasses FlowPolicy by $7.9\%$. 
These improvements demonstrate that the benefits of FCO and LAS are not confined to FocalPolicy but are generalizable to similar model architectures.


\begin{figure*}[!t]
    \centering
    
    \includegraphics[width=1.0\linewidth]{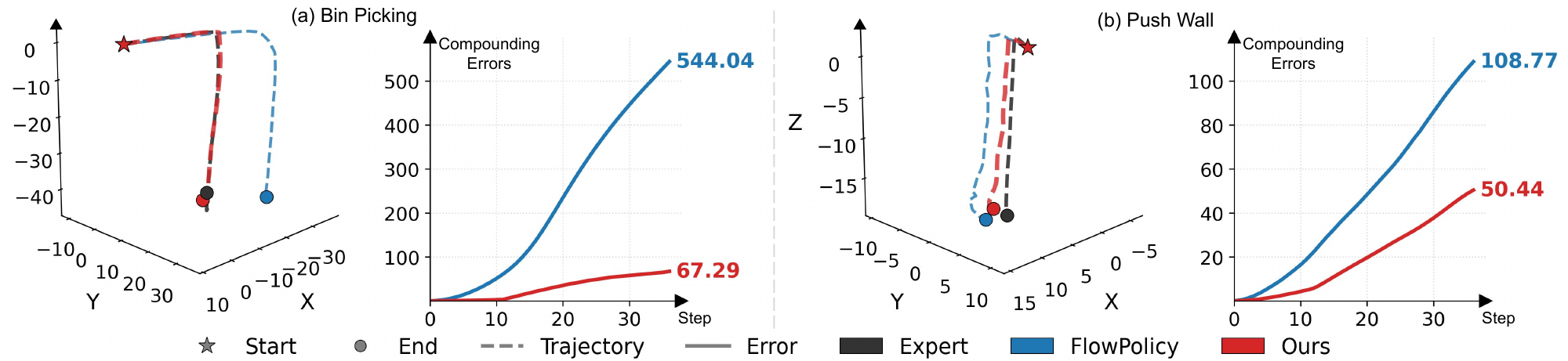}
    \caption{
    Comparison of compounding errors.
     We visualize the 3D end-effector trajectories (left) inferred by FocalPolicy and {FlowPolicy} on two representative simulation tasks (\textit{Bin Picking} and \textit{Push Wall}), given the same initial state. 
    The corresponding error curves (right) represent the \textit{Euclidean distance} between the predicted and ground-truth coordinates ($||\mathbf{p}_{\text{pred}} - \mathbf{p}_{\text{gt}}||_2$) at each time step.
    }

    \label{fig:sim_traj}
   
\end{figure*}

\textbf{3) Mitigation of Compounding Errors and Discontinuities.}
As shown in Figure~\ref{fig:sim_traj}, we visualize the predicted spatial trajectories alongside the corresponding expert trajectories, and compute their Euclidean distances to the expert ground truth to evaluate the compounding errors. Under the same 12-step prediction horizon, FocalPolicy tracks the expert trajectories more closely and exhibits significantly lower compounding errors compared to FlowPolicy. 

To rigorously quantify the inter-chunk discontinuities, we further adopt Action Total Variation (ATV)~\cite{park2025acg} to measure the temporal coherence of the generated action sequences. The ATV is defined as:
\begin{equation}
\text{ATV} = \frac{1}{(L-1)\cdot d} \sum_{t=1}^{L-1} \sum_{j=1}^{d} |a_{t+1}^j - a_t^j|,
\end{equation}
where $t$ denotes the time step in a trajectory of length $L$, and $j$ denotes the action dimension up to $d$. As reported in Table~\ref{tab:atv_results}, FocalPolicy achieves consistently lower ATV scores than FlowPolicy across the evaluated tasks.

\begin{table}[t]
\centering
\caption{Quantitative comparison of Action Total Variation (ATV $\downarrow$). Lower values indicate higher temporal coherence and smoother action sequences across chunks.}
\label{tab:atv_results}
\begin{tabular}{lcc}
\toprule
Task & FlowPolicy & \textbf{Ours} \\
\midrule
Bin Picking         & 0.99 & \textbf{0.87} \\
Disassemble         & 0.35 & \textbf{0.21} \\
Pick Out of Hole    & 0.94 & \textbf{0.92} \\
Pick Place          & \textbf{0.03} & \textbf{0.03} \\
Pick Place Wall     & 0.42 & \textbf{0.39} \\
Push Wall           & 0.39 & \textbf{0.33} \\
Shelf Place         & 0.48 & \textbf{0.46} \\
Sweep Into          & 1.66 & \textbf{1.45} \\
\bottomrule
\end{tabular}
\end{table}

Together with the qualitative reduction in compounding errors, these quantitative results strongly demonstrate that FCO effectively mitigates the discontinuities between action chunks, thereby enabling FocalPolicy to maintain higher temporal coherence in long-horizon trajectory prediction.

\textbf{4) Learning Efficiency.} 
We track the success rates of FocalPolicy and FlowPolicy during training in Figure~\ref{fig:sim_learning_efficiency}. 
The results show that FocalPolicy converges faster to a higher success rate than FlowPolicy, demonstrating superior learning efficiency. 
We attribute this improvement to LAS, which enhances target-signal propagation efficiency during consistency flow training.
For detailed implementation specifics and more results, please refer to the Appendix~\ref{simulation_results}.

\subsection{Real World Results.}

Figure \ref{fig:rw_results} illustrates the real-world performance comparison of four methods: FocalPolicy, DP3, FreqPolicy, and FlowPolicy. FocalPolicy consistently achieves superior average success rates compared to other baselines, demonstrating its exceptional capability in capturing the complex distributions of long-horizon actions. Notably, this performance advantage becomes most pronounced in the final stage. We attribute this success to the foresight composite objective, which guides the policy to bridge inter-chunk discontinuities, thereby mitigating compounding errors in imitation learning. 
Furthermore, FlowPolicy suffers a sharp performance decline across task stages, likely due to limited long-horizon modeling, whereas FocalPolicy mitigates this degradation by accurately predicting longer action chunks.

\begin{table}[t]
\caption{Ablation study of component effectiveness on 24 tasks. We report the average success rate (\%)  across tasks.}
\label{tab:ablation_module}
\resizebox{\columnwidth}{!}{
\begin{tabular}{l|cccc|c}
\toprule
 Designs & A (3) & M (11) & H (5) & VH (5) & \textbf{Average} \\
\midrule
\textbf{Ours} & $\textbf{76.9} \pm \textbf{2.2}$ & $\textbf{81.9} \pm \textbf{1.6}$ & $\textbf{62.2} \pm \textbf{4.6}$ & $\textbf{85.1} \pm \textbf{1.6}$ & $\textbf{77.9}  $\\

w/o LAS                            & $71.7 \pm 1.8$ & $65.6 \pm 2.9$ & $47.8 \pm 1.8$ & $73.3 \pm 1.1$ & $64.3$\\
w/o FCO\&LAS                        & $46.2 \pm 12.1$& $56.5 \pm 3.2$ & $43.1 \pm 0.2$ & $64.7 \pm 2.4$ & $54.1$\\
\midrule
$\mathcal{L}_{\text{time}}$-Only FCO & $34.3 \pm 1.2$  & $65.9 \pm 4.8$ & $49.4 \pm 2.1$ & $78.1 \pm 1.7$ & $61.1 $\\

$\mathcal{L}_{\text{freq}}$-Only FCO & $74.0 \pm 1.0$ & $74.4 \pm 1.9$ & $60.1 \pm 1.3$ & $81.4 \pm 1.8$ & $72.8$\\

Fixed-$r$ LAS                       & $70.4 \pm 2.2$ & $77.4 \pm 2.0$ & $59.0 \pm 3.7$ & $80.2 \pm 1.7$ & $73.3$\\
\bottomrule 

\end{tabular}
}
\end{table}


\begin{table}[t]
\centering
\caption{Horizon Ablation. We evaluate the impact of chunk size $H$ and the number of aggregated chunks $N$ across 24 tasks. 
We report the success rates under different settings of $H$ and $N$.}
\label{tab:ablation_horizon}
\resizebox{\columnwidth}{!}{
\begin{tabular}{lc|cccc|c}
\toprule
$H$ & $N$ &A (3) & M (11) & H (5) & VH (5) & \textbf{Average} \\
\midrule
\multirow{4}{*}{4} 
 & 2          & $73.6 \pm 3.2$ & $74.7 \pm 1.1$ & $59.1 \pm 5.0$ & $82.3 \pm 2.1$ & 72.9 \\
 & \textbf{3} & $\textbf{76.9}\pm\textbf{2.2}$ & $\textbf{81.9}\pm\textbf{1.6}$ & $\textbf{62.2}\pm\textbf{4.6} $ & $\textbf{85.1}\pm\textbf{1.6} $ & \textbf{77.9} \\
 & 4          & $76.1 \pm 3.4$ & $76.5 \pm 3.1$ & $58.7 \pm 1.8$ & $82.1 \pm 3.1$ & 73.9 \\
 & 5          & $70.4 \pm 2.8$ & $65.9 \pm 3.5$ & $49.5 \pm 1.5$ & $73.4 \pm 2.1$ & 64.6 \\
\midrule
\multirow{4}{*}{8} 
 & 2          & $73.7 \pm 2.3$ & $78.5 \pm 2.1$ & $59.4 \pm 1.7$ & $80.3 \pm 1.0$ & 74.3 \\
 & 3          & $74.4 \pm 1.8$ & $71.2 \pm 4.6$ & $57.1 \pm 0.6$ & $78.9 \pm 2.1$ & 70.3 \\
 & 4          & $73.4 \pm 3.7$ & $77.5 \pm 0.8$ & $60.0 \pm 1.0$ & $82.0 \pm 1.4$ & 74.3 \\
 & 5          & $72.6 \pm 3.1$ & $75.4 \pm 2.0$ & $59.7 \pm 2.3$ & $80.0 \pm 1.7$ & 72.7 \\
\midrule
\multirow{4}{*}{12} 
 & 2          & $72.7 \pm 2.1$ & $73.9 \pm 1.3$ & $61.4 \pm 0.3$ & $79.3 \pm 2.2$ & 72.3 \\
 & 3          & $73.6 \pm 2.2$ & $76.7 \pm 0.7$ & $61.6 \pm 2.6$ & $79.3 \pm 1.7$ & 73.7 \\
 & 4          & $69.7 \pm 2.9$ & $72.9 \pm 1.6$ & $59.8 \pm 2.2$ & $77.3 \pm 2.0$ & 70.7 \\
 & 5          & $69.4 \pm 1.2$ & $71.8 \pm 1.8$ & $61.3 \pm 1.6$ & $76.3 \pm 1.7$ & 70.3 \\
\bottomrule
\end{tabular}%
} 

\end{table}

\subsection{Ablation Study}
\label{subsec:ablation}
To provide a solid validation of our proposed component effectiveness, we conduct extensive ablation studies across 24 tasks.
The evaluation benchmarks encompass Adroit \textbf{(A)} and MetaWorld tasks spanning three difficulty levels: Medium \textbf{(M)}, Hard \textbf{(H)}, and Very Hard \textbf{(VH)}. 
Our analysis focuses on: \textbf{Component Effectiveness} and the \textbf{Impact of Horizon Hyperparameters}.
To ensure a fair comparison, we conduct all experiments across three random seeds ($0, 1, 2$) using the identical set of expert demonstrations.

\textbf{Component Effectiveness.}
Table~\ref{tab:ablation_module} presents the ablation results for the individual components of FocalPolicy.
We first conduct ablation studies on FCO and LAS. 
Specifically, compared to FocalPolicy, the average success rates of variants 1) w/o LAS and 2) w/o FCO\&LAS decrease by $13.6\%$ and $23.8\%$, respectively.
To further validate specific design choices, we define additional variants:
3) $\mathcal{L}_{\text{time}}$-Only FCO: Replaces long-horizon frequency supervision with a long-horizon time-domain loss, validating the superiority of frequency-domain supervision in FCO; 
4) $\mathcal{L}_{\text{freq}}$-Only FCO: Removes the time-domain loss from FCO to evaluate frequency-only supervision; and
5) Fixed-$r$ LAS: Restricts the sampling time $r$ in LAS strictly to the endpoint ($r=1$).
The superior performance of FocalPolicy demonstrates the benefits of FCO and LAS for visuomotor policies.


\textbf{Impact of Horizon Hyperparameters.}
We investigate the model sensitivity to the action chunk size $H \in \{4, 8, 12\}$ and the number of aggregated chunks $N \in \{2, 3, 4, 5\}$. 
As shown in Table~\ref{tab:ablation_horizon}, $H=4$ and $N=3$ yields optimal performance.
We attribute this to two factors: 
1) Responsiveness vs. Stability ($H$): FocalPolicy's frequency regularization allows for shorter chunks ($H=4$), facilitating rapid replanning while effectively mitigating the boundary discontinuities typically associated with short horizons;
2) Effective Spectral Context ($N$): $N=3$ provides sufficient foresight to capture low-frequency trends without inducing the over-smoothing or latency observed at larger values (e.g., $N=5$). 
Comprehensive ablation results are detailed in Appendix~\ref{ablation}.

\section{Conclusion \& Limitation}

We introduce FocalPolicy, a foresight-aware visuomotor policy that mitigates inter-chunk discontinuities in action generation by employing coarse-to-fine prioritized planning. 
By leveraging a foresight composite objective and locally anchored sampling, it achieves higher success rates than recent methods. However, it does not incorporate foresight reasoning for high-dimensional decision-making. 
Furthermore, it does not explicitly address capability enhancements for handling suboptimal demonstrations, distribution shifts, or out-of-distribution generalization. 
Despite these challenges, we believe FocalPolicy serves as an important step towards achieving foresight-aware robotic manipulation.

\newpage    
\section*{Acknowledgement}

We thank all reviewers for their valuable suggestions. 
This work is supported by the Nation Key R\&D Program of China under Grant 2024YFE0115500, the LiaoNing Revitalization Talents Program under Grant XLYC 2502011, 
the FIS project GUIDANCE (No. FIS2023-03251) and the EU Horizon project ELLIOT (No. 101214398).

\section*{Impact Statement}

This paper presents work whose goal is to advance the field of 
Machine Learning. There are many potential societal consequences 
of our work, none of which we feel must be specifically highlighted here.


\bibliography{reference}
\bibliographystyle{icml2026}

\newpage
\onecolumn
\appendix


\part*{Appendix} 

\section*{Appendix Contents} 

\begin{itemize} 
    \setlength{\itemsep}{0.5em} 
    

    \item \textbf{\hyperref[sec:theoretical_analysis_1]{A. Gradient Consistency of Foresight Composite Objective}}
    \item \textbf{\hyperref[sec:theoretical_analysis_2]{B. Theoretical Analysis of Propagation Efficiency}}
    
    \item \textbf{\hyperref[sec:experiment_details]{C. Experiment Details}}
    \begin{itemize}
        \item \hyperref[hyperparameters]{C.1 Hyper-parameters Setup}
        \item \hyperref[implementation]{C.2 Implementation details}
        \item \hyperref[simulation_results]{C.3 Simulation experiment results}
        \item \hyperref[realworld_details]{C.4 Real-World experiment details}
    \end{itemize}
    
    \item \textbf{\hyperref[ablation]{D.Ablation study results}}

    
\end{itemize}

\vspace{1em}
\hrule
\vspace{1em}

\definecolor{customdarkred}{RGB}{139, 0, 0}



\section{Theoretical Analysis: Gradient Consistency of Foresight Composite Objective}
\label{sec:theoretical_analysis_1}

In this appendix, we analyze the spectral component in the \textit{Foresight Composite Objective} (FCO) of \textbf{FocalPolicy}.
Unlike the standard setting where both time- and frequency-domain losses are defined on the same horizon, FCO uses a frequency-domain loss on a macro-trajectory of length $L = NH$ and a time-domain loss on a prefix sub-trajectory of length $H$ (steps $1{:}H$).
We establish two key results:
(i) \textbf{Non-conflicting gradients}: the spectral and prefix time losses produce aligned (non-opposing) gradient signals, with strictly positive cosine similarity at the prediction level (and positive cosine similarity in parameter space under mild coupling conditions);
(ii) \textbf{Spectral advantage beyond Parseval}: although Parseval's theorem guarantees energy equivalence between time and frequency representations on the same length-$L$ signal, we show that the spectral parameterization makes low-frequency deviations more salient via gradient concentration, improving the numerical detectability of global motion trends without requiring explicit low-frequency re-weighting.

\subsection{Definitions and Preliminaries}
\label{subsec:ta1_defs}

For clarity, we consider a single action dimension and denote the expert and predicted macro-trajectories by
$\mathbf{m}\in\mathbb{R}^{L}$ and $\hat{\mathbf{m}}(\theta)\in\mathbb{R}^{L}$, respectively, where $L=NH$.
Define the macro-trajectory error and Jacobian:
\begin{equation}
\mathbf{e} := \hat{\mathbf{m}}(\theta)-\mathbf{m},
\qquad
\mathbf{J} := \frac{\partial \hat{\mathbf{m}}}{\partial \theta}\in\mathbb{R}^{L\times|\theta|}.
\end{equation}

Let $\mathbf{D}\in\mathbb{R}^{L\times L}$ denote the length-$L$ DCT-II matrix with orthonormal normalization (Eq.~\ref{4.2}),
so that $\mathbf{D}^\top \mathbf{D} = \mathbf{I}_{L}$.
The (length-$L$) full-band frequency-domain loss is
\begin{equation}
\label{eq:app_long_freq_loss}
\mathcal{L}_{\text{freq}}^{(L)}(\theta)
:=\|\mathbf{D}\hat{\mathbf{m}}(\theta)-\mathbf{D}\mathbf{m}\|_2^2
= \|\mathbf{D}\mathbf{e}\|_2^2.
\end{equation}

The time-domain loss is defined on the first $H$ steps of the macro-trajectory (steps $1{:}H$).
Let $\mathbf{S}\in\mathbb{R}^{H\times L}$ be a fixed selection matrix that extracts this supervised prefix,
for example $\mathbf{S}=[\mathbf{I}_H\ \mathbf{0}_{H\times(L-H)}]$.
Define
\begin{equation}
\mathbf{e}_{H} := \mathbf{S}\mathbf{e}\in\mathbb{R}^{H},
\qquad
\mathcal{L}_{\text{time}}^{(H)}(\theta):=\|\mathbf{e}_H\|_2^2 = \|\mathbf{S}\mathbf{e}\|_2^2.
\label{eq:app_short_time_loss}
\end{equation}
We assume $\mathbf{S}$ is a row-submatrix of $\mathbf{I}_L$ (i.e., it selects $H$ time indices without reweighting), hence
$\mathbf{P}:=\mathbf{S}^\top\mathbf{S}\in\mathbb{R}^{L\times L}$ is an orthogonal projector with $\mathbf{P}^\top=\mathbf{P}$ and $\mathbf{P}^2=\mathbf{P}$.

\begin{theorem}[Parseval's Theorem for Orthonormal DCT]
\label{thm:parseval_dct_long}
For any $\mathbf{x}\in\mathbb{R}^{L}$, $\|\mathbf{x}\|_2^2=\|\mathbf{D}\mathbf{x}\|_2^2$.
Equivalently, $\mathbf{D}^\top\mathbf{D}=\mathbf{I}_L$.
\end{theorem}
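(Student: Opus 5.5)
The two forms of the statement are equivalent: for any $\mathbf{x}$ we have $\|\mathbf{D}\mathbf{x}\|_2^2=\mathbf{x}^\top\mathbf{D}^\top\mathbf{D}\mathbf{x}$, and a symmetric matrix $\mathbf{G}$ satisfies $\mathbf{x}^\top\mathbf{G}\mathbf{x}=\mathbf{x}^\top\mathbf{x}$ for all $\mathbf{x}$ exactly when $\mathbf{G}=\mathbf{I}_L$, by polarization. So it suffices to show that the rows of $\mathbf{D}$ from Eq.~\eqref{4.2} are orthonormal. Then $\mathbf{D}\mathbf{D}^\top=\mathbf{I}_L$, and because $\mathbf{D}$ is square this also gives $\mathbf{D}^\top\mathbf{D}=\mathbf{I}_L$.

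Write the entries as $\mathbf{D}_{ui}=c_u\cos\bigl(\tfrac{\pi}{L}(i+\tfrac12)u\bigr)$. The product-to-sum identity gives
\begin{equation*}
(\mathbf{D}\mathbf{D}^\top)_{uv}=\tfrac{c_uc_v}{2}\sum_{i=0}^{L-1}\Bigl[\cos\bigl(\tfrac{\pi}{L}(i+\tfrac12)(u-v)\bigr)+\cos\bigl(\tfrac{\pi}{L}(i+\tfrac12)(u+v)\bigr)\Bigr].
\end{equation*}
The key lemma is that for any integer $k$ with $0<|k|<2L$,
\begin{equation*}
\sum_{i=0}^{L-1}\cos\bigl(\tfrac{\pi k}{L}(i+\tfrac12)\bigr)=0.
\end{equation*}
To prove it, write the sum as the real part of the geometric sum $e^{i\pi k/(2L)}\sum_{i=0}^{L-1}e^{i\pi k i/L}$. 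When $k\neq 0 \pmod{2L}$ this equals
\begin{equation*}
\frac{e^{i\pi k/(2L)}\bigl(1-(-1)^k\bigr)}{1-e^{i\pi k/L}}=\frac{1-(-1)^k}{-2i\sin\bigl(\pi k/(2L)\bigr)},
\end{equation*}
which is purely imaginary, so its real part vanishes.

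Now apply the lemma case by case.
\begin{itemize}
\item If $u\neq v$, both $|u-v|$ and $u+v$ lie in $(0,2L)$, so both sums vanish and the off-diagonal entries are zero.
\item If $u=v>0$, the first sum equals $L$ and the second vanishes because $0<2u<2L$. The entry is then $c_u^2L/2=1$, using $c_u=\sqrt{2/L}$.
\item If $u=v=0$, every cosine equals $1$, so the entry is $c_0^2L=1$, using $c_0=\sqrt{1/L}$.
\end{itemize}
This establishes $\mathbf{D}\mathbf{D}^\top=\mathbf{I}_L$, and the energy identity follows as described above.

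The main obstacle is the vanishing lemma. The half-sample shift $(i+\tfrac12)$ is what makes the geometric sum purely imaginary, so the phase algebra has to be handled carefully. In particular the edge case $u+v$ with both indices at most $L-1$ must be checked against the range of the lemma; it is fine because $u+v\le 2L-2<2L$. Everything else is bookkeeping with the normalization constants. Alternatively, one could cite the standard fact that the orthonormal DCT-II is the eigenbasis of a symmetric path-graph Laplacian, but the direct computation is self-contained.
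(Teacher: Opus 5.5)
Your proof is correct, and it supplies something the paper does not: the paper gives no proof of this statement. In Appendix~\ref{sec:theoretical_analysis_1} it introduces $\mathbf{D}$ as the DCT-II matrix ``with orthonormal normalization \ldots so that $\mathbf{D}^\top\mathbf{D}=\mathbf{I}_L$''. It then records the energy identity as the standard Parseval theorem, cited in the main text. Orthonormality is therefore built into the definition rather than derived.

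You instead verify it from the explicit entries in Eq.~\eqref{4.2}, and each step checks out:
\begin{itemize}
\item the product-to-sum expansion of $(\mathbf{D}\mathbf{D}^\top)_{uv}$;
\item the vanishing of the half-sample-shifted cosine sums for $0<|k|<2L$ via the geometric series (your closed form $\frac{1-(-1)^k}{-2i\sin(\pi k/(2L))}$ is right, and $\sin(\pi k/(2L))\neq 0$ on that range);
\item the case check with correct range bookkeeping, namely $1\le u+v\le 2L-3$ off the diagonal and $2\le 2u\le 2L-2$ on it;
\item squareness, which turns $\mathbf{D}\mathbf{D}^\top=\mathbf{I}_L$ into $\mathbf{D}^\top\mathbf{D}=\mathbf{I}_L$;
\item polarization for the ``equivalently'' clause.
\end{itemize}

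The citation route is shorter. Yours has the advantage of confirming that the particular constants $c_0=\sqrt{1/L}$, $c_u=\sqrt{2/L}$ are exactly what makes the identity hold. An unnormalized DCT-II would instead give $\mathbf{D}\mathbf{D}^\top=\mathrm{diag}(L,L/2,\dots,L/2)$. This matters because the step $\mathcal{L}_{\text{freq}}^{(L)}=\|\mathbf{e}\|_2^2$ used immediately afterwards relies on that exact normalization.

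One cosmetic fix: in the geometric-sum step you use $i$ both as the summation index and as the imaginary unit (e.g.\ $e^{i\pi k i/L}$). Rename the index, say to $n$.
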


\subsection{Gradient Compatibility}
\label{subsec:ta1_compatibility}

We first establish a \emph{strictly positive} cosine similarity at the \emph{prediction level} (w.r.t. $\hat{\mathbf{m}}$),
and then provide a mild sufficient condition for positive cosine similarity in the \emph{parameter space} (w.r.t. $\theta$).

\paragraph{Gradients w.r.t. predictions $\hat{\mathbf{m}}$.}
By Theorem~\ref{thm:parseval_dct_long}, $\mathcal{L}_{\text{freq}}^{(L)}(\theta)=\|\mathbf{e}\|_2^2$.
Thus,
\begin{equation}
\nabla_{\hat{\mathbf{m}}}\mathcal{L}_{\text{freq}}^{(L)} = 2\mathbf{e},
\qquad
\nabla_{\hat{\mathbf{m}}}\mathcal{L}_{\text{time}}^{(H)} = 2\mathbf{P}\mathbf{e}.
\end{equation}

\begin{proposition}[Strictly Positive Cosine Similarity in Prediction Space]
\label{prop:cos_pred_space}
Whenever the short-window error is non-zero (i.e., $\mathbf{P}\mathbf{e}\neq \mathbf{0}$), we have
\begin{equation}
\cos\!\left(
\nabla_{\hat{\mathbf{m}}}\mathcal{L}_{\text{freq}}^{(L)},
\nabla_{\hat{\mathbf{m}}}\mathcal{L}_{\text{time}}^{(H)}
\right)
=
\frac{\langle \mathbf{e},\mathbf{P}\mathbf{e}\rangle}{\|\mathbf{e}\|_2\,\|\mathbf{P}\mathbf{e}\|_2}
=
\frac{\|\mathbf{P}\mathbf{e}\|_2}{\|\mathbf{e}\|_2}
\;>\;0.
\label{eq:cos_pred_positive}
\end{equation}
Therefore, the macro-scale spectral term never induces an opposing supervision direction for the prefix time objective at the prediction level.
\end{proposition}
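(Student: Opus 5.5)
The plan is to compute everything directly from the two prediction-space gradients displayed just above, $\nabla_{\hat{\mathbf{m}}}\mathcal{L}_{\text{freq}}^{(L)} = 2\mathbf{e}$ and $\nabla_{\hat{\mathbf{m}}}\mathcal{L}_{\text{time}}^{(H)} = 2\mathbf{P}\mathbf{e}$. The first comes from Theorem~\ref{thm:parseval_dct_long}; alternatively, differentiate $\|\mathbf{D}\mathbf{e}\|_2^2$ to get $2\mathbf{D}^\top\mathbf{D}\mathbf{e}=2\mathbf{e}$. The second follows by differentiating $\|\mathbf{S}\mathbf{e}\|_2^2$, which gives $2\mathbf{S}^\top\mathbf{S}\mathbf{e}$. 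The factors of $2$ cancel in the cosine, so the claim reduces to evaluating
\[
\frac{\langle \mathbf{e},\mathbf{P}\mathbf{e}\rangle}{\|\mathbf{e}\|_2\,\|\mathbf{P}\mathbf{e}\|_2}.
\]

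The key step is the identity $\langle \mathbf{e},\mathbf{P}\mathbf{e}\rangle = \|\mathbf{P}\mathbf{e}\|_2^2$. I will get it from the projector properties $\mathbf{P}^\top=\mathbf{P}$ and $\mathbf{P}^2=\mathbf{P}$:
\[
\langle \mathbf{e},\mathbf{P}\mathbf{e}\rangle=\langle \mathbf{e},\mathbf{P}^\top\mathbf{P}\mathbf{e}\rangle=\langle \mathbf{P}\mathbf{e},\mathbf{P}\mathbf{e}\rangle.
\]
Equivalently, $\mathbf{e}^\top\mathbf{S}^\top\mathbf{S}\mathbf{e}=\|\mathbf{S}\mathbf{e}\|_2^2$, and $\|\mathbf{P}\mathbf{e}\|_2=\|\mathbf{S}\mathbf{e}\|_2$ because $\mathbf{S}\mathbf{S}^\top=\mathbf{I}_H$ for a row-submatrix of the identity. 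Dividing by $\|\mathbf{e}\|_2\|\mathbf{P}\mathbf{e}\|_2$ then gives the ratio $\|\mathbf{P}\mathbf{e}\|_2/\|\mathbf{e}\|_2$.

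The only point needing care is well-definedness and strict positivity. The hypothesis $\mathbf{P}\mathbf{e}\neq\mathbf{0}$ forces $\mathbf{e}\neq\mathbf{0}$, and by the Pythagorean decomposition $\|\mathbf{e}\|_2^2=\|\mathbf{P}\mathbf{e}\|_2^2+\|(\mathbf{I}-\mathbf{P})\mathbf{e}\|_2^2$ we have $\|\mathbf{e}\|_2\ge\|\mathbf{P}\mathbf{e}\|_2>0$. So both denominators are nonzero, and the ratio lies in $(0,1]$, in particular it is strictly positive.

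The concluding sentence about never inducing an opposing direction is just the restatement that the inner product $\langle 2\mathbf{e},2\mathbf{P}\mathbf{e}\rangle = 4\|\mathbf{P}\mathbf{e}\|_2^2>0$. I do not expect a real obstacle here; the one subtle point is that the argument relies on $\mathbf{S}$ selecting indices without reweighting, which is exactly what makes $\mathbf{P}$ an orthogonal projector.
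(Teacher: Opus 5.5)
Your proposal is correct and essentially matches the paper's proof: both reduce the cosine to $\langle \mathbf{e},\mathbf{P}\mathbf{e}\rangle/(\|\mathbf{e}\|_2\|\mathbf{P}\mathbf{e}\|_2)$ and use $\mathbf{P}^\top=\mathbf{P}=\mathbf{P}^2$ to obtain $\langle \mathbf{e},\mathbf{P}\mathbf{e}\rangle=\|\mathbf{P}\mathbf{e}\|_2^2$. Your extra Pythagorean check that $\|\mathbf{e}\|_2\ge\|\mathbf{P}\mathbf{e}\|_2>0$ shows the cosine is well defined and lies in $(0,1]$, a detail the paper leaves implicit.
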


\begin{proof}
Since $\mathbf{P}$ is an orthogonal projector,
$\langle \mathbf{e},\mathbf{P}\mathbf{e}\rangle = \mathbf{e}^\top\mathbf{P}\mathbf{e}
= (\mathbf{P}\mathbf{e})^\top(\mathbf{P}\mathbf{e})=\|\mathbf{P}\mathbf{e}\|_2^2$.
Substituting yields Eq.~\eqref{eq:cos_pred_positive}.
\end{proof}

\paragraph{Gradients w.r.t. parameters $\theta$.}
Applying the chain rule,
\begin{equation}
\nabla_{\theta}\mathcal{L}_{\text{freq}}^{(L)} = 2\mathbf{J}^\top\mathbf{e},
\qquad
\nabla_{\theta}\mathcal{L}_{\text{time}}^{(H)} = 2\mathbf{J}^\top\mathbf{P}\mathbf{e}.
\label{eq:param_grads_long_short}
\end{equation}
Partition $\mathbf{e}=[\mathbf{e}_{\mathcal{S}};\mathbf{e}_{\mathcal{R}}]$ and $\mathbf{J}=[\mathbf{J}_{\mathcal{S}};\mathbf{J}_{\mathcal{R}}]$,
where $\mathcal{S}$ corresponds to the prefix indices $1{:}H$.
Then
\begin{equation}
\nabla_{\theta}\mathcal{L}_{\text{freq}}^{(L)} = 2\left(\mathbf{J}_{\mathcal{S}}^\top\mathbf{e}_{\mathcal{S}}+\mathbf{J}_{\mathcal{R}}^\top\mathbf{e}_{\mathcal{R}}\right),
\qquad
\nabla_{\theta}\mathcal{L}_{\text{time}}^{(H)} = 2\mathbf{J}_{\mathcal{S}}^\top\mathbf{e}_{\mathcal{S}}.
\end{equation}

\begin{assumption}[Non-adversarial temporal coupling]
\label{assump:non_adversarial}
The gradient contribution from the remaining (unobserved in time loss) steps is not adversarial to the supervised short-window contribution:
\begin{equation}
\left\langle \mathbf{J}_{\mathcal{R}}^\top\mathbf{e}_{\mathcal{R}},\ \mathbf{J}_{\mathcal{S}}^\top\mathbf{e}_{\mathcal{S}}\right\rangle \ge 0.
\label{eq:non_adversarial}
\end{equation}
\end{assumption}

\begin{proposition}[Positive Cosine Similarity in Parameter Space]
\label{prop:cos_param_space}
Under Assumption~\ref{assump:non_adversarial}, whenever $\nabla_{\theta}\mathcal{L}_{\text{time}}^{(H)}\neq\mathbf{0}$,
\begin{equation}
\left\langle
\nabla_{\theta}\mathcal{L}_{\text{freq}}^{(L)},
\nabla_{\theta}\mathcal{L}_{\text{time}}^{(H)}
\right\rangle
\ge
\left\|
\nabla_{\theta}\mathcal{L}_{\text{time}}^{(H)}
\right\|_2^2
>0,
\end{equation}
and hence
$\cos\!\left(\nabla_{\theta}\mathcal{L}_{\text{freq}}^{(L)},\nabla_{\theta}\mathcal{L}_{\text{time}}^{(H)}\right)>0$.
\end{proposition}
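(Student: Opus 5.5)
The plan is to work directly from the partitioned gradient expressions derived just before Assumption~\ref{assump:non_adversarial}. Write $\mathbf{g}_{\mathcal{S}} := \mathbf{J}_{\mathcal{S}}^\top\mathbf{e}_{\mathcal{S}}$ and $\mathbf{g}_{\mathcal{R}} := \mathbf{J}_{\mathcal{R}}^\top\mathbf{e}_{\mathcal{R}}$. By Eq.~\eqref{eq:param_grads_long_short} together with Theorem~\ref{thm:parseval_dct_long}, which lets us replace $\|\mathbf{D}\mathbf{e}\|_2^2$ by $\|\mathbf{e}\|_2^2$ before differentiating, we have $\nabla_{\theta}\mathcal{L}_{\text{freq}}^{(L)} = 2(\mathbf{g}_{\mathcal{S}}+\mathbf{g}_{\mathcal{R}})$ and $\nabla_{\theta}\mathcal{L}_{\text{time}}^{(H)} = 2\mathbf{g}_{\mathcal{S}}$. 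One point needs a line of justification: $\mathbf{J}^\top\mathbf{P}\mathbf{e} = \mathbf{J}_{\mathcal{S}}^\top\mathbf{e}_{\mathcal{S}}$. This holds because $\mathbf{S}$ is the row-selection of the prefix indices, so $\mathbf{P}\mathbf{e}$ equals $\mathbf{e}$ on $\mathcal{S}$ and is zero on $\mathcal{R}$.

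Next, expand the inner product bilinearly:
\begin{equation*}
\left\langle \nabla_{\theta}\mathcal{L}_{\text{freq}}^{(L)}, \nabla_{\theta}\mathcal{L}_{\text{time}}^{(H)} \right\rangle = 4\|\mathbf{g}_{\mathcal{S}}\|_2^2 + 4\langle \mathbf{g}_{\mathcal{R}}, \mathbf{g}_{\mathcal{S}}\rangle .
\end{equation*}
Assumption~\ref{assump:non_adversarial} makes the cross term nonnegative. Dropping it gives a lower bound of $4\|\mathbf{g}_{\mathcal{S}}\|_2^2 = \|\nabla_{\theta}\mathcal{L}_{\text{time}}^{(H)}\|_2^2$. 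That quantity is strictly positive because $\nabla_{\theta}\mathcal{L}_{\text{time}}^{(H)}\neq\mathbf{0}$ by hypothesis.

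For the cosine statement, a positive inner product forces both vectors to be nonzero. The cosine is therefore well defined and has the sign of the inner product, hence is positive. One can add the quantitative bound
\begin{equation*}
\cos \ge \frac{\|\nabla_{\theta}\mathcal{L}_{\text{time}}^{(H)}\|_2}{\|\nabla_{\theta}\mathcal{L}_{\text{freq}}^{(L)}\|_2},
\end{equation*}
which mirrors Proposition~\ref{prop:cos_pred_space}.

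The argument is essentially routine. The only place requiring care is the bookkeeping of factors of $2$, which must be done consistently so that the constant $4$ matches $\|\nabla_{\theta}\mathcal{L}_{\text{time}}^{(H)}\|_2^2$ exactly. The identification of the projected Jacobian product with the $\mathcal{S}$-block also depends on $\mathbf{S}$ selecting rows without reweighting, which was assumed earlier; I would state that dependence explicitly. No genuine obstacle remains, since all the real content is carried by Assumption~\ref{assump:non_adversarial}.
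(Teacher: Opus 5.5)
Your proof is correct and takes the same route as the paper. Both expand the inner product into $4\|\mathbf{J}_{\mathcal{S}}^\top\mathbf{e}_{\mathcal{S}}\|_2^2$ plus $4\langle \mathbf{J}_{\mathcal{R}}^\top\mathbf{e}_{\mathcal{R}},\mathbf{J}_{\mathcal{S}}^\top\mathbf{e}_{\mathcal{S}}\rangle$ and use Assumption~\ref{assump:non_adversarial} to drop the cross term. Your extra points (that a positive inner product makes the cosine well defined, and the bound $\cos\ge\|\nabla_\theta\mathcal{L}_{\text{time}}^{(H)}\|_2/\|\nabla_\theta\mathcal{L}_{\text{freq}}^{(L)}\|_2$) are valid small refinements the paper leaves implicit.
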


\begin{proof}
By decomposition,
\begin{align}
\left\langle
\nabla_{\theta}\mathcal{L}_{\text{freq}}^{(L)},
\nabla_{\theta}\mathcal{L}_{\text{time}}^{(H)}
\right\rangle
&=
4\left\langle
\mathbf{J}_{\mathcal{S}}^\top\mathbf{e}_{\mathcal{S}}+\mathbf{J}_{\mathcal{R}}^\top\mathbf{e}_{\mathcal{R}},
\ \mathbf{J}_{\mathcal{S}}^\top\mathbf{e}_{\mathcal{S}}
\right\rangle \nonumber\\
&=
4\left\|
\mathbf{J}_{\mathcal{S}}^\top\mathbf{e}_{\mathcal{S}}
\right\|_2^2
+
4\left\langle
\mathbf{J}_{\mathcal{R}}^\top\mathbf{e}_{\mathcal{R}},
\ \mathbf{J}_{\mathcal{S}}^\top\mathbf{e}_{\mathcal{S}}
\right\rangle .
\end{align}
Assumption~\ref{assump:non_adversarial} makes the second term non-negative, yielding the claim.
\end{proof}

\begin{remark}[A sufficient structural condition]
A sufficient structural condition for Assumption~\ref{assump:non_adversarial} is $\mathbf{J}_{\mathcal{R}}\mathbf{J}_{\mathcal{S}}^\top=\mathbf{0}$,
i.e., the parameter sensitivities of the supervised prefix window and the remaining steps are orthogonal (e.g., temporally separated heads).
Even when not exact, this orthogonality is often approximately satisfied, leading to empirically positive cosine similarity.
\end{remark}

\begin{remark}[Why Parseval does not contradict empirical gains]
Parseval's theorem implies that the \emph{macro-scale} unweighted full-band spectral loss $\|\mathbf{D}\mathbf{e}\|_2^2$ equals the \emph{macro-scale} time-domain loss $\|\mathbf{e}\|_2^2$
\emph{on the same length-$L$ signal}.
In FCO, however, the time-domain term is defined only on a \emph{prefix window} of length $H$, whereas the spectral term supervises the full macro-trajectory of length $L=NH$.
Therefore, the two objectives are not equivalent, and the spectral term provides additional full-trajectory supervisory signal that can improve optimization and long-range consistency.
\end{remark}

\subsection{Spectral Sensitivity Analysis}
\label{subsec:ta1_sensitivity}

A core motivation for using frequency-domain supervision is its ability to capture overall motion trend.
While Parseval's theorem guarantees energy equivalence on the \emph{same} length-$L$ signal
(i.e., $\|\mathbf{e}\|_2=\|\mathbf{D}\mathbf{e}\|_2$ for orthonormal $\mathbf{D}$),
it does not imply equivalence in \emph{optimization dynamics} in our setting:
FCO pairs a full-trajectory spectral loss with a prefix-window time loss, and, moreover,
the DCT representation can \emph{concentrate} low-frequency (trend-level) errors into a small set of coefficients,
making such errors more numerically salient under noisy stochastic optimization.

Let $\mathbf{E}=\mathbf{D}\mathbf{e}$ denote the DCT coefficients of the error.
The unweighted full-band spectral loss can be written as
\begin{equation}
\mathcal{L}_{\text{freq}}^{(L)}=\frac{1}{2}\|\mathbf{E}\|_2^2=\frac{1}{2}\sum_{k=0}^{L-1}E_k^2,
\qquad
\nabla_{\mathbf{E}}\mathcal{L}_{\text{freq}}^{(L)}=\mathbf{E}.
\end{equation}
This mode-wise decomposition makes explicit that each frequency component contributes independently, and that
errors dominated by a few low-frequency modes yield \emph{sparse, concentrated} gradients in coefficient space.

\begin{proposition}[Gradient Concentration and Sensitivity Gain for Macroscopic Drift]
\label{prop:grad_concentration}
Consider a macroscopic drift error modeled as a constant DC offset $e_n=c$ for all $n\in\{1,\dots,L\}$, where $|c|\ll 1$.
Let $\boldsymbol{\phi}_0=\frac{1}{\sqrt{L}}\mathbf{1}$ denote the DC DCT basis vector.
Then the error concentrates entirely on the DC coefficient:
\begin{equation}
E_0=\langle \mathbf{e},\boldsymbol{\phi}_0\rangle=\sqrt{L}\,c,
\qquad
E_k=0\ \ \text{for}\ \ k>0,
\end{equation}
and hence the coefficient-space gradient is sparse:
\begin{equation}
(\nabla_{\mathbf{E}}\mathcal{L}_{\text{freq}}^{(L)})_k
=
\begin{cases}
\sqrt{L}\,c, & k=0,\\
0, & k>0.
\end{cases}
\end{equation}
Consequently, the peak gradient magnitude in coefficient space satisfies
\begin{equation}
\|\nabla_{\mathbf{E}}\mathcal{L}_{\text{freq}}^{(L)}\|_\infty=\sqrt{L}|c|.
\end{equation}

In contrast, in the time domain, the per-timestep gradient for the full-trajectory quadratic loss
$\mathcal{L}_{\text{time}}^{(L)}=\frac{1}{2}\|\mathbf{e}\|_2^2$ is uniformly distributed:
\begin{equation}
\nabla_{\hat{m}_n}\mathcal{L}_{\text{time}}^{(L)} = e_n = c,\quad \forall n,
\qquad
\|\nabla_{\hat{\mathbf{m}}}\mathcal{L}_{\text{time}}^{(L)}\|_\infty=|c|.
\end{equation}
Therefore, for coherent macroscopic drift, the DCT coefficient space exhibits a sensitivity gain of $\sqrt{L}$ in the peak gradient magnitude:
\begin{equation}
\mathrm{Gain}
=
\frac{\|\nabla_{\mathbf{E}}\mathcal{L}_{\text{freq}}^{(L)}\|_\infty}{\|\nabla_{\hat{\mathbf{m}}}\mathcal{L}_{\text{time}}^{(L)}\|_\infty}
=\sqrt{L}.
\end{equation}
\end{proposition}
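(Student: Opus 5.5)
The argument is a direct computation in the orthonormal DCT basis set up in the Definitions subsection, together with Theorem~\ref{thm:parseval_dct_long}.

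\textbf{Step 1: locate the DC basis vector.} The rows of $\mathbf{D}$ are the basis vectors $\boldsymbol{\phi}_k$, where $(\boldsymbol{\phi}_k)_i=c_k\cos\!\big(\tfrac{\pi}{L}(i+0.5)k\big)$ as in Eq.~\eqref{4.2}. For $k=0$ the cosine equals $1$ and $c_0=\sqrt{1/L}$, so $\boldsymbol{\phi}_0=\tfrac{1}{\sqrt L}\mathbf{1}$. Therefore $E_k=(\mathbf{D}\mathbf{e})_k=\langle\mathbf{e},\boldsymbol{\phi}_k\rangle$. For $\mathbf{e}=c\mathbf{1}=c\sqrt L\,\boldsymbol{\phi}_0$ this gives $E_0=\sqrt L\,c$.

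\textbf{Step 2: show the other coefficients vanish.} I would avoid summing cosines explicitly. Since $\mathbf{D}^\top\mathbf{D}=\mathbf{I}_L$ and $\mathbf{D}$ is square, $\mathbf{D}\mathbf{D}^\top=\mathbf{I}_L$, so the rows are orthonormal. Hence for $k>0$ we have $E_k=c\sqrt L\,\langle\boldsymbol{\phi}_0,\boldsymbol{\phi}_k\rangle=0$. As a sanity check, the same follows from the identity $\sum_{i=0}^{L-1}\cos(\pi k(i+\tfrac12)/L)=0$ for $0<k<L$, which is the real part of a geometric sum of $e^{i\pi k(i+1/2)/L}$. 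Parseval also confirms consistency: $\|\mathbf{E}\|_2^2=E_0^2=Lc^2=\|\mathbf{e}\|_2^2$.

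\textbf{Step 3: read off the gradients.} Using the half-scaled form $\mathcal{L}_{\text{freq}}^{(L)}=\tfrac12\|\mathbf{E}\|_2^2$ stated just before the proposition, we get $\nabla_{\mathbf{E}}\mathcal{L}_{\text{freq}}^{(L)}=\mathbf{E}$. This yields the stated sparse vector, whose $\ell_\infty$ norm is $\sqrt L|c|$. In the time domain, $\mathcal{L}_{\text{time}}^{(L)}=\tfrac12\|\mathbf{e}\|_2^2$ with $\mathbf{e}=\hat{\mathbf{m}}-\mathbf{m}$ gives $\partial/\partial\hat m_n=e_n=c$ for every $n$, so the $\ell_\infty$ norm is $|c|$. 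Taking the ratio gives the gain $\sqrt L$, provided $c\neq0$; for $c=0$ the gain is undefined.

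\textbf{Expected obstacle.} None of the steps is hard. The only points needing care are bookkeeping. First, the proposition indexes time as $n=1,\dots,L$ while Eq.~\eqref{4.2} uses $i=0,\dots,L-1$; this is only a relabeling and does not affect the result. Second, both losses must use the same $\tfrac12$ scaling, which they do here, so the ratio is exactly $\sqrt L$. Third, the assumption $|c|\ll1$ plays no role in the algebra. I would remark that the gain is a statement about peak gradient magnitude in coefficient space, and that the $\ell_2$ norms of the two gradients coincide by Theorem~\ref{thm:parseval_dct_long}.
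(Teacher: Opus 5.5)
Your proposal is correct and takes the same route as the paper. The paper has no separate proof; the statement itself carries the computation: $E_0=\langle\mathbf{e},\boldsymbol{\phi}_0\rangle=\sqrt{L}\,c$, $E_k=0$ for $k>0$ by orthonormality of the DCT basis, $\nabla_{\mathbf{E}}\mathcal{L}_{\text{freq}}^{(L)}=\mathbf{E}$ from the $\tfrac12$-scaled loss, and $\nabla_{\hat m_n}\mathcal{L}_{\text{time}}^{(L)}=c$ in the time domain — exactly your steps. Your extra points (the $c\neq 0$ caveat, the matching $\tfrac12$ scalings, and that the $\ell_2$ norms agree by Parseval) are all sound.
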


\paragraph{Implication for numerical optimization (SNR viewpoint).}
In stochastic gradient optimization, gradient estimates are often corrupted by noise $\boldsymbol{\eta}$
(e.g., mini-batching noise or interference from other objective terms).
When the macroscopic drift $|c|$ is small, the uniform time-domain gradient magnitude $|c|$ can be dominated by noise,
leading to ineffective updates.
In the DCT coefficient space, however, the same drift produces a concentrated signal of magnitude $\sqrt{L}|c|$
on a small set of low-frequency coefficients (here, only the DC term), improving the effective signal-to-noise ratio.
In particular, even when $|c|\lesssim \|\boldsymbol{\eta}\|_\infty$, the condition $\sqrt{L}|c|\gtrsim \|\boldsymbol{\eta}\|_\infty$
can still hold for moderate $L$, enabling the optimizer to reliably detect and correct global trend errors.
This provides a numerical explanation for why unweighted full-band spectral supervision can better capture macroscopic motion structure
without requiring explicit low-frequency re-weighting.

\paragraph{Why explicit low-frequency weighting is not superior.}
For completeness, consider a weighted spectral objective with $\mathbf{W}=\mathrm{diag}(w_0,\dots,w_{L-1})$:
\begin{equation}
\mathcal{L}_{\text{w-spec}}=\frac{1}{2}\|\mathbf{W}\mathbf{E}\|_2^2=\frac{1}{2}\sum_{k=0}^{L-1} w_k^2 E_k^2.
\end{equation}
Its gradients satisfy
\begin{equation}
\nabla_{\mathbf{E}}\mathcal{L}_{\text{w-spec}}=\mathbf{W}^2\mathbf{E},
\qquad
\nabla_{\hat{\mathbf{m}}}\mathcal{L}_{\text{w-spec}}=\mathbf{D}^\top\mathbf{W}^2\mathbf{D}\,\mathbf{e}.
\end{equation}
If $w_k$ decays with $k$ (emphasizing low frequencies), then $\mathbf{D}^\top\mathbf{W}^2\mathbf{D}$ shapes the time-domain gradient
toward a low-pass update: it strengthens corrections to smooth global structure but suppresses higher-frequency corrections.
While this can help when errors are purely trend-dominated, diverse manipulation behaviors often require fine-scale, high-frequency adjustments
(e.g., contact transitions and rapid micro-corrections). Over-emphasizing low frequencies can therefore under-train these components,
explaining why explicit band weighting is not better than the unweighted full-band spectral loss in practice.

\section{Theoretical Analysis of Propagation Efficiency}
\label{sec:theoretical_analysis_2}
In this appendix, we analyze the proposed \emph{Locally Anchored Sampling} (LAS) strategy in \textbf{FocalPolicy}.
We formalize a metric of \emph{Target-signal Propagation Efficiency} that measures how faithfully a consistency-based update at a student time $\tau$ matches an ideal supervised update, and prove that biasing the teacher (anchored) time $r$ towards the terminal region improves this efficiency under a standard monotonicity assumption. Consequently, LAS promotes more accurate target-signal propagation during consistency training.

\subsection{Preliminaries and Definitions}
We use the same notation as Sec.~\ref{subsec:LA_policy}. Let $\tau$ denote the student flow time and $r$ denote the teacher (anchored) time. Along the OT path (Eq.~\eqref{4.7}), the interpolated macro-trajectories are
\begin{equation}
\label{b_1}
\mathbf{M}_{\tau}=(1-\tau)\mathbf{M}_0+\tau \mathbf{M}_1,\qquad
\mathbf{M}_{r}=(1-r)\mathbf{M}_0+r \mathbf{M}_1,
\end{equation}
where $\mathbf{M}_0$ is Gaussian noise and $\mathbf{M}_1$ is the expert macro-trajectory.
Conditioned on the observation $o_t$, the model predicts velocities $v_\theta(\mathbf{M}_{\tau},\tau,o_t)$ and the EMA target network predicts $v_{\theta^-}(\mathbf{M}_{r},r,o_t)$. In consistency-based flow matching, for a given student timestep $\tau$, we sample a teacher timestep $r$ biased towards the terminal region (typically $r\ge \tau$). The standard consistency loss (simplified for a single sample) is given by:
\begin{equation}
\mathcal{L}_{\text{cons}}(\theta;\tau,r)
=\left\|v_\theta(\mathbf{M}_{\tau},\tau,o_t)
-\mathrm{sg}\!\left[v_{\theta^-}(\mathbf{M}_{r},r,o_t)\right]\right\|_2^2,
\end{equation}
where $\mathrm{sg}[\cdot]$ denotes stop-gradient. The resulting (velocity-space) consistency gradient at time $\tau$ is
\begin{equation}
g_{\text{cons}}(\tau,r)
:=\nabla_{v_\theta(\mathbf{M}_{\tau},\tau,o_t)}\mathcal{L}_{\text{cons}}
=2\Big(v_\theta(\mathbf{M}_{\tau},\tau,o_t)-v_{\theta^-}(\mathbf{M}_{r},r,o_t)\Big).
\end{equation}
Under the OT construction used in flow matching, the optimal target velocity is constant:
\begin{equation}
u^*(\mathbf{M}_{\tau},\tau,o_t)=\mathbf{M}_1-\mathbf{M}_0.
\end{equation}
Thus the ideal supervised gradient at time $\tau$ is
\begin{equation}
g_{\text{sup}}(\tau)
:=\nabla_{v_\theta(\mathbf{M}_{\tau},\tau,o_t)} \| v_\theta(\mathbf{M}_{\tau},\tau,o_t) - u^*(\mathbf{M}_{\tau},\tau,o_t) \|^2 =2\Big(v_\theta(\mathbf{M}_{\tau},\tau,o_t)-(\mathbf{M}_1-\mathbf{M}_0)\Big).
\end{equation}
\begin{definition}[Target Signal Propagation Efficiency]
\label{def:prop_eff}
\textit{For a fixed student time $\tau$, let $p(r|\tau)$ denote the sampling distribution of the teacher time $r$. The propagation efficiency $\mathcal{E}(\tau)$ is defined as the negative expected mean squared error between the consistency gradient and the ideal supervised gradient:}
\begin{equation}
\mathcal{E}(\tau)
:=-\mathbb{E}_{r\sim p(r|\tau)}
\left[\left\|g_{\text{cons}}(\tau,r)-g_{\text{sup}}(\tau)\right\|_2^2\right].
\end{equation}
\end{definition}
Substituting the gradient definitions, we can simplify the efficiency term:
\begin{align}
\left\|g_{\text{cons}}(\tau,r)-g_{\text{sup}}(\tau)\right\|_2^2
&=\left\|2\big(v_\theta(\tau)-v_{\theta^-}(r)\big)-2\big(v_\theta(\tau)-u^*\big)\right\|_2^2 \nonumber\\
&=4\left\|u^*-v_{\theta^-}(\mathbf{M}_r,r,o_t)\right\|_2^2,
\end{align}
where we abbreviate $v_\theta(\tau):=v_\theta(\mathbf{M}_\tau,\tau,o_t)$, $v_{\theta^-}(r):=v_{\theta^-}(\mathbf{M}_r,r,o_t)$, and $u^*:=\mathbf{M}_1-\mathbf{M}_0$. 
Remarkably, the student model's prediction $v_\theta(\tau)$ cancels out, indicating that the propagation efficiency depends solely on the quality of the teacher's target estimation at time $r$.
By construction of the OT path (Eq.~\eqref{b_1}), the optimal velocity field implies a straight trajectory, thus $u^*(\mathbf{M}_\tau, \tau) = \mathbf{M}_1 - \mathbf{M}_0$ is constant for all $\tau \in [0,1]$.
Thus, the metric simplifies to the teacher's prediction error:
\begin{equation}
\label{eq:eff_teacher_error}
\mathcal{E}(\tau)\ \propto\ -\mathbb{E}_{r\sim p(r|\tau)}
\left[\left\|v_{\theta^-}(\mathbf{M}_r,r,o_t)-(\mathbf{M}_1-\mathbf{M}_0)\right\|_2^2\right].
\end{equation}
Eq.~\eqref{eq:eff_teacher_error} shows that maximizing $\mathcal{E}(\tau)$ is equivalent to minimizing the expected teacher prediction error evaluated at the sampled teacher time $r$.

\subsection{Proof of Superiority}
\label{subsec:ta2_proof}
We now compare the proposed Locally Anchored Sampling against a standard baseline (e.g., uniform sampling or local neighborhood sampling).
\begin{assumption}[Monotone Teacher Error Towards the Terminal]
\label{assum:monotonic}
Let
\begin{equation}
\epsilon(s):=\mathbb{E}\!\left[\left\|v_{\theta^-}(\mathbf{M}_s,s,o_t)-(\mathbf{M}_1-\mathbf{M}_0)\right\|_2^2\right].
\end{equation}
We assume $\epsilon(s)$ is strictly decreasing as $s$ approaches the terminal boundary:
\begin{equation}
\forall s_1,s_2\in[0,1],\quad s_1<s_2\ \Rightarrow\ \epsilon(s_1)>\epsilon(s_2).
\end{equation}
\end{assumption}

\begin{definition}[First-Order Stochastic Dominance]
\label{def:fsd}
For a fixed $\tau$, we say $p_{\text{LAS}}(r|\tau)$ first-order stochastically dominates $p_{\text{base}}(r|\tau)$ on $[\tau,1]$,
denoted $p_{\text{LAS}}\succeq_{\mathrm{FSD}} p_{\text{base}}$, if for all $a\in[\tau,1]$,
\begin{equation}
\Pr_{r\sim p_{\text{LAS}}}(r\ge a\,|\,\tau)\ \ge\ \Pr_{r\sim p_{\text{base}}}(r\ge a\,|\,\tau),
\end{equation}
with strict inequality for some $a$.
\end{definition}

\begin{proposition}
\label{prop:las_superiority}
Under Assumption~\ref{assum:monotonic}, if $p_{\text{LAS}}(r|\tau)\succeq_{\mathrm{FSD}} p_{\text{base}}(r|\tau)$ for all $\tau$, the Locally Anchored Sampling strategy yields strictly higher propagation efficiency than any sampling strategy that places probability mass further from the terminal state.
\end{proposition}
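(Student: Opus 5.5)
The plan is to reduce the comparison of propagation efficiencies to a comparison of expectations of a single monotone function under two distributions, and then apply the standard stochastic-dominance inequality. By Eq.~\eqref{eq:eff_teacher_error}, for each fixed $\tau$ we have
\[
\mathcal{E}(\tau) = -4\,\mathbb{E}_{r\sim p(r|\tau)}[\epsilon(r)].
\]
The teacher error $\epsilon(\cdot)$ of Assumption~\ref{assum:monotonic} does not depend on which sampling law is used to pick $r$, because it is an expectation over $(\mathbf{M}_0,\mathbf{M}_1,o_t)$ at a fixed time $s$. This holds provided we treat the EMA teacher $\theta^-$ as fixed when comparing the two strategies, which is the same frozen-teacher convention used in Sec.~\ref{sec:theoretical_analysis_2}. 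So the claim $\mathcal{E}_{\text{LAS}}(\tau) > \mathcal{E}_{\text{base}}(\tau)$ is equivalent to
\[
\mathbb{E}_{p_{\text{LAS}}}[\epsilon(r)] < \mathbb{E}_{p_{\text{base}}}[\epsilon(r)].
\]
I would first make this reduction explicit, including the conditioning on $\tau$ and the tower property that moves the inner expectation over data into $\epsilon(r)$.

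The key step is the tail-integral (layer-cake) representation. I will assume $\epsilon$ is bounded on $[0,1]$, which is natural since it is an $L^2$ prediction error, and that both distributions are supported in $[\tau,1]$. Write $\epsilon(r) = \epsilon(1) + \int_r^1 (-d\epsilon(a))$. Since $\epsilon$ is strictly decreasing, $-d\epsilon$ is a positive measure $\nu$ on $[\tau,1]$ with full support-type positivity: every nondegenerate interval has positive $\nu$-mass. By Fubini,
\[
\mathbb{E}_p[\epsilon(r)] = \epsilon(1) + \int_{[\tau,1]} \Pr_p(r<a\,|\,\tau)\, d\nu(a)
= \epsilon(1) + \int_{[\tau,1]} \bigl(1-\Pr_p(r\ge a\,|\,\tau)\bigr)\, d\nu(a).
\]
Subtracting the LAS and base versions gives
\[
\mathbb{E}_{\text{base}}[\epsilon] - \mathbb{E}_{\text{LAS}}[\epsilon]
= \int \bigl(\Pr_{\text{LAS}}(r\ge a) - \Pr_{\text{base}}(r\ge a)\bigr)\, d\nu(a).
\]
By Definition~\ref{def:fsd} this integrand is nonnegative, which yields the weak inequality.

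The main obstacle is strictness. Definition~\ref{def:fsd} only gives strict inequality at \emph{some} point $a_0$, and a single point may carry zero $\nu$-mass. To handle this I would use right-continuity of survival functions written as $\Pr(r>a)$, or left-continuity of $\Pr(r\ge a)$ in $a$. Either way, the strict gap at $a_0$ persists on a small interval $[a_0-\delta,a_0]$ or $[a_0,a_0+\delta]$, and strict monotonicity of $\epsilon$ gives that interval positive $\nu$-mass. If the one-sided-continuity bookkeeping is delicate, I would fall back on an elementary route: approximate $\epsilon$ by decreasing step functions, for which the difference is a positive combination of survival-function gaps, and pass to the limit using boundedness.

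Finally, I would note how the statement's phrase ``any strategy placing mass further from the terminal'' is read. It is exactly $p_{\text{base}} \preceq_{\mathrm{FSD}} p_{\text{LAS}}$, and uniform $r\sim\mathcal{U}[\tau,1]$ is one instance. For that instance I would check FSD for the logit-normal of Eq.~\eqref{4.6}, conditioned to $[\tau,1]$, only as a remark, since the proposition takes dominance as a hypothesis.
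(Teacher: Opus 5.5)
This is essentially the paper's proof: both reduce $\mathcal{E}(\tau)$ to $-\mathbb{E}_{r}[\epsilon(r)]$ and then use the fact that first-order stochastic dominance lowers the mean of a strictly decreasing function; the paper only cites that fact, and silently upgrades ``strict at some $a$'' to ``strict on a set of positive measure'', whereas you prove it via the tail-integral representation and obtain strictness from left-continuity of $a\mapsto\Pr(r\ge a)$. One caution: writing $\epsilon(r)=\epsilon(1)+\nu((r,1])$ presumes $\epsilon$ is right-continuous (harmless for atomless laws such as the uniform and logit-normal, where you may pass to the right-continuous version of $\epsilon$), and your step-function fallback would give only the weak inequality because strictness does not survive limits, so for general $\epsilon$ the quantile coupling $F_{\text{LAS}}^{-1}(U)\ge F_{\text{base}}^{-1}(U)$ with $U\sim\mathcal{U}[0,1]$ is the cleaner route.
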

\begin{proof}
From Eq.~\eqref{eq:eff_teacher_error}, for any sampling strategy $\pi$ we can write the propagation efficiency (up to a positive constant factor) as
\begin{equation}
\label{eq:eff_exp_form}
\mathcal{E}_{\pi}(\tau)\ \propto\ -\,\mathbb{E}_{r\sim p_{\pi}(r|\tau)}\big[\epsilon(r)\big],
\end{equation}
where $\epsilon(r)$ is defined in Assumption~\ref{assum:monotonic} and is strictly decreasing on $[0,1]$.

We now compare $\pi\in\{\text{LAS},\text{base}\}$ for a fixed $\tau$.
By Definition~\ref{def:fsd}, the condition
$p_{\text{LAS}}(\cdot|\tau)\succeq_{\mathrm{FSD}} p_{\text{base}}(\cdot|\tau)$
means that $p_{\text{LAS}}$ assigns (weakly) more probability mass to larger teacher times:
\begin{equation}
\Pr_{r\sim p_{\text{LAS}}}(r\ge a\,|\,\tau)\ \ge\ \Pr_{r\sim p_{\text{base}}}(r\ge a\,|\,\tau),
\qquad \forall a\in[\tau,1],
\end{equation}
with strict inequality for at least one threshold $a$.
Equivalently, letting $F_{\text{LAS}}(a|\tau)$ and $F_{\text{base}}(a|\tau)$ denote the conditional CDFs of $r$,
this is the same as
\begin{equation}
F_{\text{LAS}}(a|\tau)\ \le\ F_{\text{base}}(a|\tau),\qquad \forall a\in[\tau,1],
\end{equation}
with strict inequality on a set of nonzero measure.

Since $\epsilon(\cdot)$ is strictly decreasing (Assumption~\ref{assum:monotonic}),
the random variable $\epsilon(r)$ is therefore stochastically \emph{smaller} when $r$ is stochastically larger.
Formally, a standard consequence of first-order stochastic dominance states that for any strictly decreasing measurable function $\phi$,
\begin{equation}
r_{\text{LAS}}\succeq_{\mathrm{FSD}} r_{\text{base}}
\quad\Longrightarrow\quad
\mathbb{E}\big[\phi(r_{\text{LAS}})\big]\ <\ \mathbb{E}\big[\phi(r_{\text{base}})\big],
\end{equation}
whenever the dominance is strict.
Applying this result with $\phi(\cdot)=\epsilon(\cdot)$ yields
\begin{equation}
\label{eq:eps_expect_ineq}
\mathbb{E}_{r\sim p_{\text{LAS}}(r|\tau)}[\epsilon(r)]
\ <\
\mathbb{E}_{r\sim p_{\text{base}}(r|\tau)}[\epsilon(r)].
\end{equation}

Finally, substituting Eq.~\eqref{eq:eps_expect_ineq} into Eq.~\eqref{eq:eff_exp_form} and noting the leading negative sign,
we obtain
\begin{equation}
\mathcal{E}_{\text{LAS}}(\tau)\ >\ \mathcal{E}_{\text{base}}(\tau),
\end{equation}
In summary, by shifting the sampling distribution towards the terminal region where the teacher is more reliable, Locally Anchored Sampling effectively reduces the variance of the supervision signal, leading to higher propagation efficiency.
\end{proof}

\section{Experiment Details}
\label{sec:experiment_details}

\subsection{Hyperparameters Setup}
\label{hyperparameters}
This section details the hyper-parameter configurations used for training and evaluating the proposed FocalPolicy. Table. \ref{tab:focal_hyperparameters} summarizes the complete set of hyper-parameters used in our experiments. For all baseline methods, including DP3, FlowPolicy, and FreqPolicy, we use the default hyper-parameter settings to avoid introducing unintended advantages. For both the Adroit and Meta-World benchmarks, all models are trained for 3000 training iterations to ensure convergence and a fair comparison of learning dynamics.

\textbf{Hyperparameters for FocalPolicy in LAS ($\mu=4.0, \sigma=1.6$). }
Our theoretical analysis (Proposition~\ref{prop:las_superiority}) indicates that the sampling distribution should be strictly skewed towards the terminal time $r=1$ to maximize propagation efficiency while maintaining non-zero variance for robustness.
We employ a Logit-Normal distribution $r = \sigma(x), x \sim \mathcal{N}(\mu, \sigma^2)$.
\begin{itemize} 
\item Choice of $\mu=4.0$: This maps to a median time of $r \approx 0.982$, ensuring that the majority of consistency supervision comes from the near-terminal region where the teacher's prediction error $\epsilon(r)$ is minimal. 
\item Choice of $\sigma=1.6$: This standard deviation provides a heavy left tail in the probability density.
Specifically, it ensures that approximately $95\%$ of the sampling mass is concentrated in the high-fidelity region $r \in [0.5, 1.0]$, while retaining a small probability for $r < 0.5$ to enforce global consistency constraints preventing overfitting to the terminal state. 
\end{itemize}
To further investigate the impact of different anchoring strategies, we compare three representative parameter configurations for the time $r$: $\mu_r=1.4, \sigma_r=0.5$ (anchored near 0.8), $\mu_r=0.4, \sigma_r=0.5$ (anchored near 0.6), and our optimal setting $\mu_r=4.0, \sigma_r=1.6$ (biased toward 1.0). The resulting sampling distributions for the flow timesteps $\tau$ and $r$, derived from 50,000 random samples, are visualized in Figure~\ref{fig:sampling_dist}.

We compared the performance of our model under three different sampling parameter configurations across three Adroit tasks and four MetaWorld tasks, as summarized in Table~\ref{tab:r_sampling_ablation}. The results demonstrate a significant performance degradation as the anchor time $r$ shifts away from the terminal region; specifically, the average success rate plummets from 69.7\% to 13.9\% when the anchor center moves from approximately $1.0$ to $0.6$. These observations directly validate Assumption~\ref{assum:monotonic}, confirming that the teacher model's predictive accuracy diminishes significantly as the anchor time $r$ moves toward earlier, more stochastic flow stages. As elucidated in our theoretical analysis in Appendix~\ref{sec:theoretical_analysis_2}, this degradation in teacher reliability leads to a rapid decay of the target-signal propagation efficiency $\mathcal{E}(\tau)$, because the consistency gradient $g_{\text{cons}}$ fails to faithfully approximate the ideal supervised gradient $g_{\text{sup}}$. Such empirical findings provide strong evidence that local anchoring within the high-fidelity terminal region is an indispensable condition for providing stable, low-variance supervision. This mechanism plays a decisive role in enabling FocalPolicy to effectively capture the complex distributions inherent in the longer macro-trajectories.

\begin{table*}[t]
\caption{\textbf{Ablation of anchor time $r$ configurations} in LAS. We compare the sampling distributions of $r$ with different $\mu_r$ and $\sigma_r$ settings. The configuration ($\mu_r=4.0, \sigma_r=1.6$, biased toward 1.0) represents our optimal setting, which yields the best performance across most tasks.}
\label{tab:r_sampling_ablation}
\vspace{-2pt}
\begin{center}
\resizebox{\linewidth}{!}{ 
    \begin{small}
    \setlength{\tabcolsep}{3pt} 
    \begin{tabular}{lcccccccc}
    \toprule
    \multirow{2}{*}{\textbf{Config. of $r$}} & \multicolumn{3}{c}{\textbf{Adroit}} & \multicolumn{4}{c}{\textbf{Meta-World}} & \multirow{2}{*}{\textbf{Average}} \\
    \cmidrule(lr){2-4} \cmidrule(lr){5-8}
    & {Hammer} & {Door} & {Pen} & {Sweep-Into} & \textsc{Hand-Insert} & \textsc{Pick-Place} & \textsc{Disassemble} & \\
    \midrule
    $\mu_r=0.4, \sigma_r=0.5$ ($\approx 0.6$) & $14.0 \pm 4.4$ & $17.7 \pm 6.7$ & $35.0 \pm 7.5$ & $12.3 \pm 3.1$ & $8.3 \pm 1.5$ & $1.0 \pm 1.0$ & $9.0 \pm 1.0$ & $13.9$ \\
    
    $\mu_r=1.4, \sigma_r=0.5$ ($\approx 0.8$) & $80.7 \pm 3.5$ & $59.7 \pm 5.9$ & $63.0 \pm 3.6$ & $32.0 \pm 22.3$& $16.7 \pm 2.5$ & $29.0 \pm 3.6$ & $52.3 \pm 9.3$ & $47.6$ \\
    
    \midrule
    \rowcolor{gray!15}
    \textbf{Ours ($\mu_r=4.0, \sigma_r=1.6$)} & $\mathbf{100.0 \pm 0.0}$ & $\mathbf{63.2 \pm 5.3}$ & $\mathbf{67.5 \pm 1.3}$ & $\mathbf{70.7 \pm 22.0}$ & $\mathbf{29.3 \pm 16.2}$ & $\mathbf{70.7 \pm 4.2}$ & $\mathbf{86.7 \pm 1.5}$ & $\mathbf{69.7}$ \\
    \bottomrule
    \end{tabular}
    \end{small}
}
\end{center}
\vspace{-6pt}
\end{table*}
\begin{table}[htbp]
\centering
\caption{\textbf{Hyperparameters setup for FOCAL.}}
\label{tab:focal_hyperparameters}
\small
\renewcommand{\arraystretch}{1.1}
\begin{tabularx}{0.75\textwidth}{X|c}
\toprule
\textbf{Hyperparameter} & \textbf{Value} \\
\midrule
Horizon ($L$) & 12 \\
Action steps ($H$) & 4 \\
Observation steps ($n_{obs}$) & 2 \\
Pointcloud feature dim & 64 \\
State mlp size & 64 \\
Frequency weight ($\lambda_{freq}$) & 1.0e-4 \\
Anchor $r$ mean ($\mu_r$) & 4.0 \\
Anchor $r$ std ($\sigma_r$) & 1.6 \\
Condition type & FiLM \\
U-Net down dims & [512, 1024, 2048] \\
Kernel size & 5 \\
Number of groups ($n_{groups}$) & 8 \\
Pointnet type & MLP \\
Batch size & 128 \\
Number of epochs & 3000 \\
Optimizer & AdamW \\
Betas ($\beta_1, \beta_2$) & [0.95, 0.999] \\
Learning rate & 1.0e-4 \\
Weight decay & 1.0e-6 \\
LR scheduler & Cosine (500 steps warmup) \\
EMA max value & 0.9999 \\
\bottomrule
\end{tabularx}
\end{table}

\begin{algorithm}[tb]
\caption{FocalPolicy Training Algorithm}
\label{alg:focal}
\textbf{Require}: Dataset $\mathcal{D}$, Train Horizon $L$, Execution Horizon $H$, Anchor params $\mu_r, \sigma_r$, Weight $\lambda$. \\
\textbf{Initialize}: Policy network $\theta$, Target network $\theta^- \leftarrow \theta$.\\
\textcolor{blue}{\textbf{$\triangleright$ Sampling Step}:} \\
~~~~$({o_t}, \mathbf{M}_t) \sim \mathcal{D}$; $\mathbf{M}_0 \sim \mathcal{N}(\mathbf{0}, \mathbf{I})$; $\epsilon \sim \mathcal{N}(0,1)$; $\tau \sim \mathcal{U}[0, 1]$; $r = \mathrm{sigmoid}(\mu_r + \sigma_r \cdot \epsilon)$; \\
\textcolor{blue}{\textbf{$\triangleright$ Construct OT interpolants}} \\
~~~~$\mathbf{M}_{\tau} = (1-\tau)\mathbf{M}_0 + \tau \mathbf{M}_t$; $\mathbf{M}_{r} = (1-r)\mathbf{M}_0 + r \mathbf{M}_t$;\\
\textcolor{blue}{\textbf{$\triangleright$ Flow Prediction}:}\\
~~~~$v_{\theta}^{\tau} \leftarrow v_{\theta}(\mathbf{M}_{\tau}, \tau, o_t)$; $v_{\theta^-}^{r} \leftarrow v_{\theta^-}(\mathbf{M}_{r}, r, o_t)$;\\
~~~~$\hat{\mathbf{M}}^{\tau} \leftarrow f_{\theta}(\mathbf{M}_{\tau}, \tau, o_t) = \mathbf{M}_{\tau} + (1-\tau)\, v_{\theta}^{\tau}$;\\
~~~~$\hat{\mathbf{M}}^{r} \leftarrow f_{\theta^-}(\mathbf{M}_{r}, r, o_t) = \mathbf{M}_{r} + (1-r)\, v_{\theta^-}^{r}$;\\
\textcolor{blue}{\textbf{$\triangleright$ Dual-Horizon Loss}:}\\
~~~~$\mathcal{L}_{\text{time}} = \| (\hat{\mathbf{M}}^{\tau})_{1:H} - (\hat{\mathbf{M}}^{r})_{1:H} \|^2$; $\mathcal{L}_{\text{freq}} = \| \text{DCT}(\hat{\mathbf{M}}^{\tau}) - \text{DCT}(\mathbf{M}_t) \|^2$;\\
~~~~$\mathcal{L}_{\text{total}} = \mathcal{L}_{\text{time}} + \lambda \mathcal{L}_{\text{freq}}$;\\
\textcolor{blue}{\textbf{$\triangleright$ Optimization}:}\\
~~~~$\theta \leftarrow \theta - \eta \nabla_\theta \mathcal{L}_{\text{total}}$; $\theta^- \leftarrow \beta \theta^- + (1-\beta) \theta$;\\
\textbf{Return}: Optimized Policy Parameters $\theta$.
\end{algorithm}
\begin{figure*}[!t]
    \centering
    \includegraphics[width=0.6\linewidth]{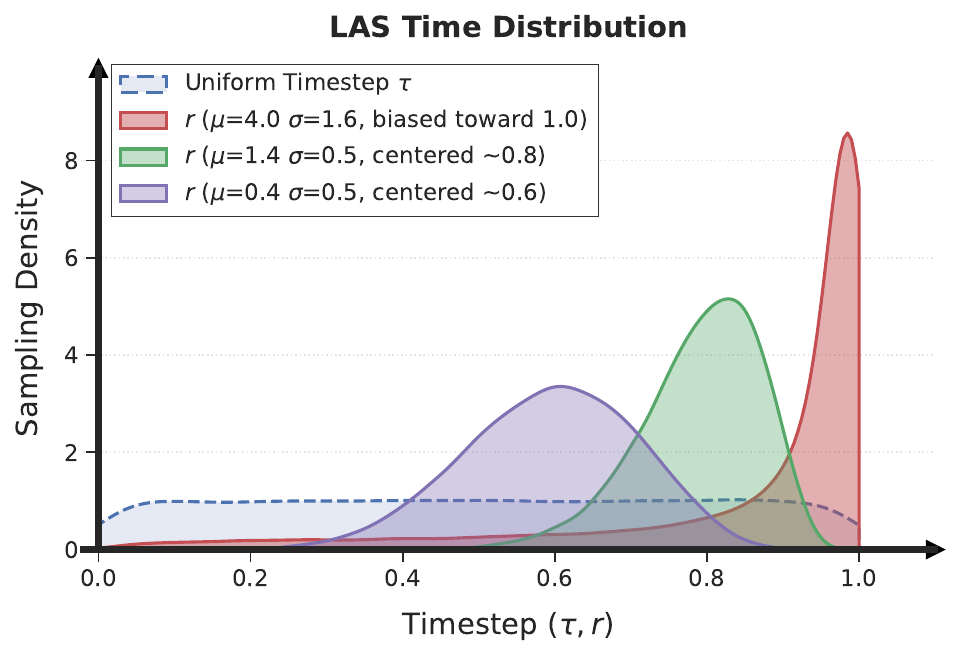}
    \caption{Visualization of the sampling distributions in LAS. We compare standard uniform sampling (dashed blue curve) against our \textbf{Locally Anchored Time Sampling} (solid red curve) for flow timepoints. By biasing the anchor time $r$ towards the terminal boundary ($\tau \to 1$), FOCAL significantly enhances the target signal propagation efficiency compared to uniform sampling, thereby stabilizing training for complex multi-chunk action distributions.}
    \label{fig:sampling_dist} 
\end{figure*}

\subsection{Implementation details}
\label{implementation}
\paragraph{Network Architecture and Hardware.} Following prior work~\cite{chi2023diffusion,Ze2024DP3,jia2024score,lu2024manicm,zhang2025flowpolicy,su2025freqpolicy}, we adopt a standard 1D CNN-based U-Net architecture as the backbone of FocalPolicy to enable fair and well-controlled comparisons with existing visuomotor policies. FocalPolicy is primarily designed for 3D visuomotor manipulation. We encode the input 3D point clouds with a lightweight MLP to obtain compact feature representations, which are then fused with the action-generation backbone. This design provides a favorable trade-off between representational capacity and computational efficiency, and is consistent with the point-cloud encoders commonly used in prior 3D manipulation policies~\cite{Ze2024DP3,zhang2025flowpolicy,su2025freqpolicy}. All experiments are conducted on NVIDIA RTX 4090, RTX 3090, and A6000 GPUs. The complete training process is summarized in \textbf{Algorithm~\ref{alg:focal}}. 

\paragraph{Training Time and Computational Costs.} We benchmark the computational efficiency of FocalPolicy against DP3 and FlowPolicy using a workstation equipped with an Intel Core i9-14900KF CPU and an NVIDIA GeForce RTX 4090 GPU. The evaluation is conducted across four MetaWorld tasks spanning a spectrum of difficulty levels: Reach Wall (Easy), Sweep Into (Medium), Pick Place (Hard), and Disassemble (Very Hard). Detailed metrics are summarized in Table~\ref{tab:training_costs}. The results indicate that FocalPolicy achieves the lowest evaluation latency (26.22s). This efficiency stems from our hierarchical design: compared to FlowPolicy (28.92s), FocalPolicy utilizes a longer prediction horizon per inference; compared to DP3 (42.49s), it requires significantly fewer Number of Function Evaluations (NFE). Although the training time and GPU memory footprint of FocalPolicy (8035MB) are slightly higher than those of FlowPolicy (7377MB) and DP3 (7769MB) due to additional frequency-domain transformations and the modeling of extended macro-trajectories, this marginal increase in resource consumption is well-justified by a substantial gain of approximately 20\% in average success rate.

\begin{table*}[t]
\centering
\caption{\textbf{Computational Costs and Performance Comparison.} Metrics are measured on a single workstation equipped with an \textbf{Intel Core i9-14900KF CPU} and an \textbf{NVIDIA GeForce RTX 4090 GPU}. }
\label{tab:training_costs}
\small
\setlength{\tabcolsep}{21pt} 
\renewcommand{\arraystretch}{1.2} 

\begin{tabular*}{\textwidth}{l c c c c}
\toprule
\textbf{Method} & \textbf{GPU Memory} & \textbf{Train Time} & \textbf{Eval Time (s)} $\downarrow$ & \textbf{Success Rate (\%)} $\uparrow$ \\
\midrule
DP3                  & 7769 MB & 1.3 h & 42.49 & 61.3 \\
FlowPolicy           & 7377 MB & 1.4 h & 28.92 & 59.6 \\
\midrule
\rowcolor{gray!15}
\textbf{FocalPolicy} & 8035 MB & 1.5 h & 26.22 & 78.6 \\
\bottomrule
\end{tabular*}
\vspace{-2pt}
\end{table*}
\paragraph{Module Generalizability setups.} 
To evaluate the plug-and-play potential of our modules, we integrate \textit{Foresight Composite Objective} (FCO) and \textit{Locally Anchored Sampling} (LAS) into representative baselines. 
First, since \textbf{DP3} relies on diffusion sampling steps, it is not directly compatible with our flow-based time sampling (\textbf{LAS}). 
We therefore only integrate \textbf{FCO} into DP3, denoted as \textbf{DP3 w.\ Ours}. 
Concretely, we apply the frequency-domain loss to DP3's 16-step predicted trajectory, while applying the time-domain loss on the 8-step execution chunk. 
Second, for \textbf{FlowPolicy}, we integrate our \textbf{LAS} module, denoted as \textbf{FP w.\ Ours}. 
In this variant, we replace FlowPolicy's original time sampling with our LAS. 
Since our sampling strategy differs from standard consistency flow matching, we remove FlowPolicy's original segmented time-domain loss to align with the LAS formulation. 
Note that we do not integrate \textbf{FCO} into FlowPolicy in this plug-and-play test. 
This is because FlowPolicy, in its default configuration, predicts and executes only 4 steps. 
This short prediction horizon lacks the multi-chunk macro-trajectory structure necessary for frequency-domain structural regularization. 
Integrating FCO would require altering FlowPolicy's horizon settings, which is beyond the scope of this evaluation of module compatibility.

\subsection{Simulation experiment results}
\label{simulation_results}
To comprehensively evaluate the robustness and generalization of the FocalPolicy, we report the performance across 53 challenging manipulation tasks from the Adroit and Meta-World benchmarks. 
For each task, we conduct evaluations over 3 independent random seeds (0, 1, 2). 
We report the mean success rate (\%) along with the sample standard deviation to rigorously account for stochastic variability during training and rollout. 
To ensure a principled and fair comparison, all evaluated methods—including our FocalPolicy and all baselines—are trained on the identical set of expert demonstrations and evaluated under a unified protocol. 
The exhaustive per-task results are documented in Table. \ref{tab:full_appendix_results}.

\begin{table*}[t]
\caption{Supplementary evaluation on RoboMimic and LIBERO benchmarks. We report the mean success rate (0--1) across diverse and modern tasks, alongside the average inference latency (Cost).}
\label{tab:supplementary_benchmarks}
\begin{center}
    \begin{small}
    \setlength{\tabcolsep}{1pt}
    \begin{tabular*}{\linewidth}{@{\extracolsep{\fill}}lcccccccccc}
    \toprule
    \multicolumn{1}{c}{\multirow{2}{*}{\textbf{Method \textbackslash Task}}} & \multicolumn{4}{c}{RoboMimic} & \multicolumn{3}{c}{LIBERO} & \multirow{2}{*}{\textbf{Average $\uparrow$}} & \multirow{2}{*}{\textbf{Cost (ms) $\downarrow$}} \\
    \cmidrule(lr){2-5} \cmidrule(lr){6-8}
    & Lift & Can & Square & Transport & Open-Drawer & Put-Bowl & Stack-Bowl & & \\
    \midrule
    FlowPolicy  & $0.99$ & $0.98$ & $0.74$ & $0.44$ & $0.88$ & $0.55$ & $0.54$ & $0.73$ & $\textbf{5.40}$ \\
    FreqPolicy  & $0.98$ & $0.90$ & $0.74$ & $0.38$ & $\textbf{1.00}$ & $0.70$ & $\textbf{0.64}$ & $0.76$ & $18.90$ \\
    ACG         & $0.99$ & $0.99$ & $0.78$ & $0.50$ & $0.97$ & $0.46$ & $0.22$ & $0.70$ & $9.50$ \\
    \midrule
    \textbf{Ours} & $\textbf{1.00}$ & $\textbf{1.00}$ & $\textbf{0.82}$ & $\textbf{0.67}$ & $\textbf{1.00}$ & $\textbf{0.73}$ & ${0.45}$ & $\textbf{0.81}$ & $\textbf{5.40}$ \\
    \bottomrule
    \end{tabular*}
    \end{small}
\end{center}
\end{table*}

\paragraph{Generalization to Modern and Diverse Benchmarks.} 
To further validate the robustness and generalization of FocalPolicy beyond scripted heuristics, we extend our evaluation to more diverse and modern benchmarks, including \textbf{RoboMimic}~\cite{robomimic2021} and \textbf{LIBERO}~\cite{liu2023libero}. Unlike Adroit and Meta-World, RoboMimic provides expert datasets collected from human demonstrators, which introduce significant behavioral diversity and noise, posing a greater challenge for action chunking. LIBERO (2023) represents a more recent and complex suite of tasks designed to test the limits of imitation learning policies. 

As summarized in Table~\ref{tab:supplementary_benchmarks}, we compare FocalPolicy against several strong baselines, including \textbf{ACG}~\cite{park2025acg} (integrated into FlowPolicy for a fair comparison), which is explicitly designed to address cross-chunk coherence. FocalPolicy consistently outperforms all baselines across both benchmarks. Notably, on the high-dimensional human-demonstrated RoboMimic tasks (\textit{e.g.}, \textit{Square} and \textit{Transport}), FocalPolicy achieves the highest success rates, demonstrating its superior ability to model complex, multi-modal expert distributions. 

Furthermore, we evaluate the computational efficiency in terms of mean inference latency. Despite its enhanced foresight and coherence, FocalPolicy maintains a low inference cost of $5.40$\,ms, matching the vanilla FlowPolicy and significantly outperforming FreqPolicy ($18.90$\,ms) and ACG ($9.50$\,ms). This efficiency stems from our FCO formulation, which regularizes the policy during training without requiring auxiliary modules or iterative optimization during inference. These results confirm that FocalPolicy advances the state-of-the-art by providing a more effective and efficient solution for coherent long-horizon manipulation.

\paragraph{Mitigation of Compounding Errors.} 
To evaluate the robustness of our model against compounding errors in {long-trajectory prediction}, we conduct a comparative analysis between {FocalPolicy} and the baseline {FlowPolicy}. 
Both models are trained for 3,000 epochs on the same expert training set using identical configurations: a macro-trajectory length of $L=36$ and an execution chunk of $H=12$. 
To isolate the accumulated drift inherent in temporal predictions, we perform an {open-loop rollout} on a held-out test set.
Specifically, both policies are initialized using only the first two frames of observations from the test sequences. 
From this initial state, the models predict the full subsequent trajectory of 36 steps without further environmental feedback. 
Given that MetaWorld data is recorded in incremental coordinates, we normalize the starting position of all trajectories to the origin $(0,0,0)$ to facilitate a direct comparison of spatial deviation. 
We then visualize the predicted trajectories against the ground-truth expert demonstrations and quantify the compounding error at each timestep using the Euclidean distance of absolute positions. 
This setup rigorously tests the policies' ability to maintain structural coherence over {long trajectories} without the corrective benefit of closed-loop observations.
As shown in Figure~\ref{fig:app_sim_traj}, to provide a more comprehensive demonstration of FocalPolicy, we compare the predicted trajectories and compounding errors against FlowPolicy on additional MetaWorld tasks, including \textit{Pick Place}, \textit{Pick Out of Hole}, \textit{Sweep Into}, \textit{Pick Place Wall} \textit{Disassemble} and \textit{Shelf Place}.

\begin{figure*}[!t]
    \centering
    
    \includegraphics[width=1.0\linewidth]{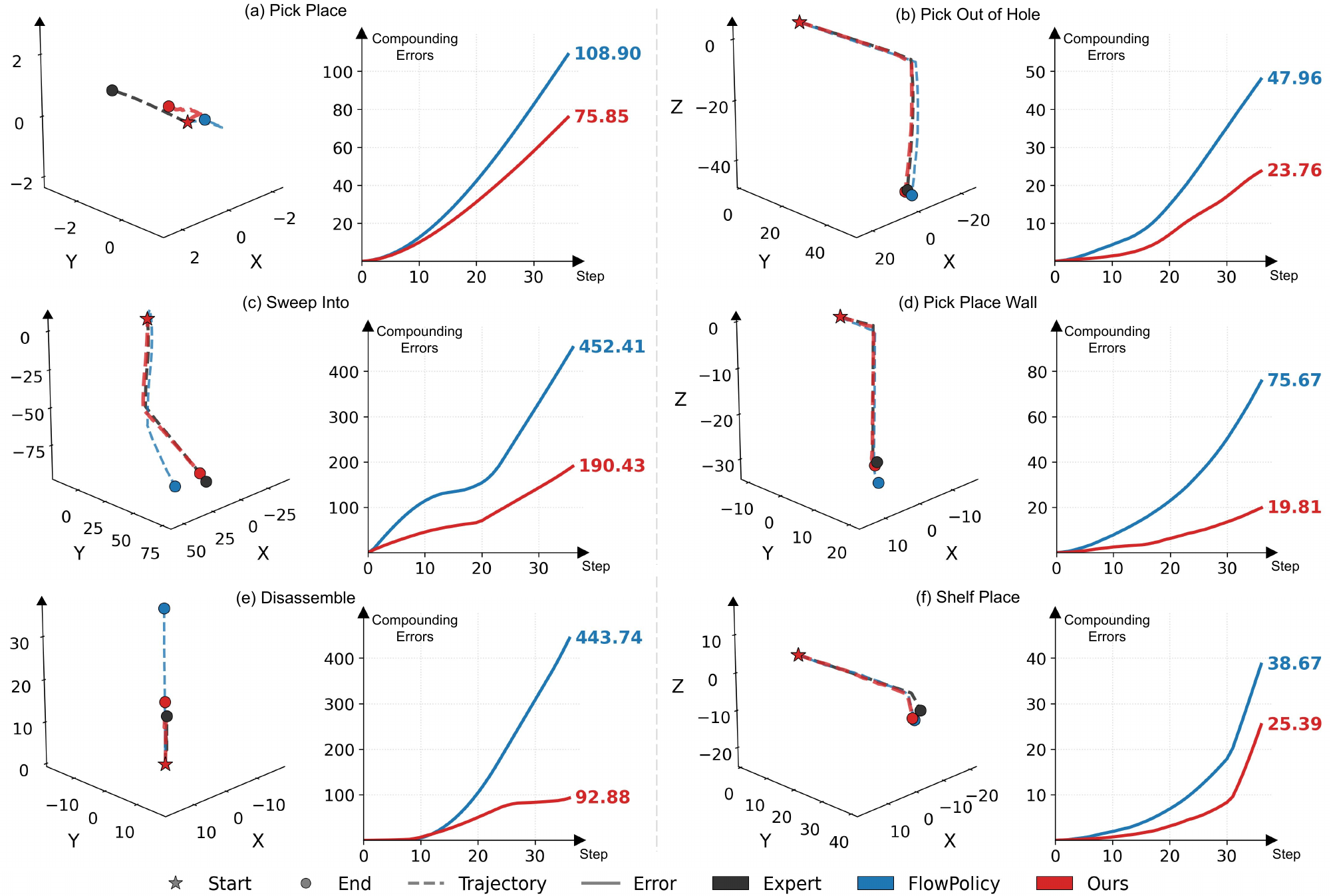}
    \caption{
    Comparison of compounding errors.
     We visualize the 3D end-effector trajectories (left) inferred by FocalPolicy and {FlowPolicy} on more representative simulation tasks, given the same initial state. 
    The corresponding error curves (right) represent the \textit{Euclidean distance} between the predicted and ground-truth coordinates ($||\mathbf{p}_{\text{pred}} - \mathbf{p}_{\text{gt}}||_2$) at each time step.
    }

    \label{fig:app_sim_traj}
   
\end{figure*}
\begin{table*}[htbp]
\centering
\caption{Quantitative comparison of Trajectory Smoothness score ($\text{TS}_{\text{score}} \downarrow$). A lower score indicates that the policy's trajectory smoothness is closer to the expert's.}
\label{tab:ts_results}
\resizebox{\textwidth}{!}{
\begin{tabular}{l cccccccc}
\toprule
Method & Bin Picking & Disassemble & Pick Out of Hole & Pick Place & Pick Place Wall & Push Wall & Shelf Place & Sweep Into \\
\midrule
FlowPolicy    & 0.27 & 0.09 & 0.04 & \textbf{0.01} & 0.03 & 0.09 & 0.03 & 0.50 \\
\textbf{Ours} & \textbf{0.13} & \textbf{0.01} & \textbf{0.02} & \textbf{0.01} & \textbf{0.00} & \textbf{0.00} & \textbf{0.01} & \textbf{0.40} \\
\bottomrule
\end{tabular}
}
\end{table*}

\textbf{Quantitative Evaluation of Trajectory Smoothness.} Furthermore, we calculated the Trajectory Smoothness (TS) metric introduced in~\cite{songwei2026cola} to rigorously evaluate the temporal smoothness of the predicted trajectories. In this context, we utilize TS as a relative measure of imitation fidelity rather than an absolute performance metric. Higher trajectory quality is defined by how closely a policy's TS aligns with the expert's TS, rather than its absolute magnitude. For a more intuitive representation, we define the TS score as $\text{TS}_{\text{score}} (\downarrow) = |\text{TS}_{\text{expert}} - \text{TS}_{\text{policy}}|$ and report these scores in Table~\ref{tab:ts_results}.

As shown in the table, FocalPolicy achieves lower or equal $\text{TS}_{\text{score}}$ across all 8 evaluated tasks, indicating that its predicted trajectories better match the smoothness of the expert demonstrations. This demonstrates that FocalPolicy significantly improves the modeling of multi-chunk trajectories compared to the baseline FlowPolicy. Through the comparison of these quantitative metrics, we demonstrate that FocalPolicy produces more coherent action trajectories that closely resemble expert actions, effectively addressing the challenge of discontinuity across action chunks in imitation learning. We attribute this improvement to the Foresight Composite Objective (FCO), which enhances the model's foresight in action prediction, and the Locally Anchored Sampling (LAS) strategy, which improves the learning efficiency of the policy.

\subsection{Real-World experiment details}
\label{realworld_details}

\begin{figure*}[!t]
    \centering
    \includegraphics[width=1.0\linewidth]{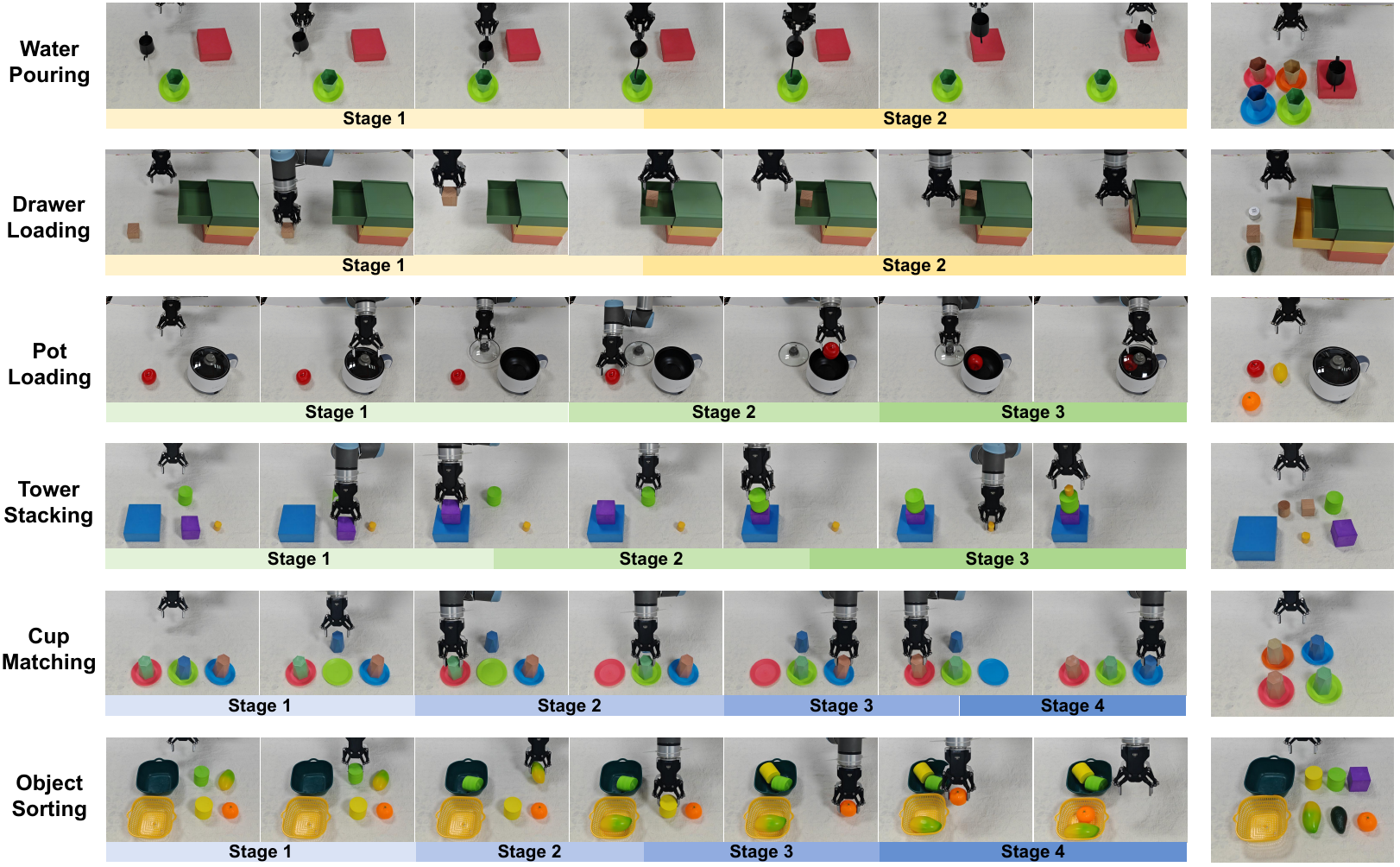}
    \caption{Illustration of real-world task phase decomposition. The figure displays the phase segmentation of six tasks on the left, alongside the corresponding variant types for each task on the right.}

    \label{fig:rw_all_tasks}
\end{figure*}

\textbf{Hardware Specifications.}
Our real-world robotic platform consists of a \textbf{UR-10e 6-DOF industrial arm} and an \textbf{AG-95 two-finger gripper}. 
Visual input is captured by a fixed \textbf{RealSense D435i RGB-D camera} at a resolution of $840 \times 480$. 
All computations, including policy inference and vision processing, are performed on a workstation equipped with an \textbf{Intel Core i9-14900KF CPU} and an \textbf{NVIDIA GeForce RTX 4090 GPU}.

\textbf{Expert Demonstration and Training.}
For each task, we collected \textbf{30-50 expert demonstrations} using a \textbf{Logitech F710 Wireless Gamepad}, recording end-effector Cartesian poses and gripper states. 
During training, we applied standard data augmentations, including random cropping and color jittering, to enhance the policy's visual robustness. 

\textbf{Task Definitions and Multi-stage Decomposition.}
As illustrated in Figure~\ref{fig:rw_all_tasks}, we designed six challenging tasks to evaluate FocalPolicy's ability to handle long-horizon, multi-stage manipulation:

\begin{table*}[!t]
\centering
\caption{Real-world results across staged manipulation tasks. We report \textbf{Success Rates $(\%) \uparrow$ / Score $\uparrow$} for six real-world tasks---\textit{Water Pouring}, \textit{Pot Loading}, \textit{Cup Matching}, \textit{Drawer Loading}, \textit{Tower Stacking}, and \textit{Object Sorting}---each decomposed into sequential stages (Stage 1, Stage 2, \emph{etc.}). }
\label{tab:real_world_results}
\footnotesize
\setlength{\tabcolsep}{2.5pt}
\renewcommand{\arraystretch}{1.05}

\resizebox{\textwidth}{!}{%
\begin{minipage}{\textwidth}

\begin{tabularx}{\textwidth}{l|YY|YYY|YYYY}
\toprule
\multirow{2}{*}{\textbf{Alg $\backslash$ Task}} & \multicolumn{2}{c|}{\textbf{Water Pouring}} & \multicolumn{3}{c|}{\textbf{Pot Loading }} &  \multicolumn{4}{c}{\textbf{Cup Matching}} \\
 & Stage 1 & Stage 2 & Stage 1 & Stage 2 & Stage 3 & Stage 1 & Stage 2 & Stage 3 & Stage 4 \\
\midrule
DP3                  & 80.0 / 12.0 & 60.0 / 9.0 & 76.7 / 11.5 & 53.3 / 8.0 & 30.0 / 4.5 & 70.0 / 10.5 & 53.3 / 8.0 & 43.3 / 6.5 & 23.3 / 3.5 \\
FreqPolicy           & 73.3 / 11.0 & 50.0 / 7.5 & 63.3 / 9.5 & 46.7 / 7.0 & 23.3 / 3.5 & 63.3 / 9.5 & 46.7 / 7.0 & 36.7 / 5.5 & 10.0 / 1.5 \\
FlowPolicy           & 83.3 / 12.5 & 53.3 / 8.0 & 66.7 / 10.0 & 50.0 / 7.5 & 20.0 / 3.0 & 60.0 / 9.0 & 50.0 / 7.5 & 33.3 / 5.0 & 13.3 / 2.0 \\
\rowcolor{gray!15}
\textbf{FocalPolicy} & 86.7 / 13.0 & 63.3 / 9.5 & 76.7 / 11.5 & 53.3 / 8.0 & 33.3 / 5.0 & 73.3 / 11.0 & 56.7 / 8.5 & 50.0 / 7.5 & 30.0 / 4.5 \\
\bottomrule
\end{tabularx}

\vspace{0.35em} 

\begin{tabularx}{\textwidth}{l|YY|YYY|YYYY}
\toprule
\multirow{2}{*}{\textbf{Alg $\backslash$ Task}} & \multicolumn{2}{c|}{\textbf{Drawer Loading}} & \multicolumn{3}{c|}{\textbf{Tower Stacking}} & \multicolumn{4}{c}{\textbf{Object Sorting}} \\
& Stage 1 & Stage 2 & Stage 1 & Stage 2 & Stage 3 & Stage 1 & Stage 2 & Stage 3 & Stage 4 \\
\midrule
DP3                  & 90.0 / 13.5 & 60.0 / 9.0 & 80.0 / 12.0 & 56.7 / 8.5 & 33.3 / 5.0 & 83.3 / 12.5 & 66.7 / 10.0 & 36.7 / 5.5 & 20.0 / 3.0 \\
FreqPolicy           & 86.7 / 13.0 & 53.3 / 8.0 & 76.7 / 11.5 & 53.3 / 8.0 & 23.3 / 3.5 & 86.7 / 13.0 & 70.0 / 10.5 & 33.3 / 5.5 & 16.7 / 2.5 \\
FlowPolicy           & 80.0 / 12.0 & 56.7 / 8.5 & 83.3 / 12.5 & 46.7 / 7.0 & 20.0 / 3.0 & 80.0 / 12.0 & 63.3 / 9.5 & 26.7 / 4.0 & 6.7 / 1.0 \\

\rowcolor{gray!15}
\textbf{FocalPolicy} & 93.3 / 14.0 & 66.7 / 10.0 & 86.7 / 13.0 & 53.3 / 8.0 & 40.0 / 6.0 & 86.7 / 13.0 & 73.3 / 11.0 & 46.7 / 7.0 & 26.7 / 4.0\\
\bottomrule
\end{tabularx}

\end{minipage}%
} 
\end{table*}

\begin{itemize} 
\item 1) \textbf{Water Pouring} (2 stages, 4 variants): The robot grasps a kettle and maneuvers it above a cup. Subsequently, it pours water into the cup and returns the kettle to a designated resting position. This task challenges the smoothness and accuracy of the manipulation trajectory. 
\item 2) \textbf{Drawer Loading} (2 stages, 6 variants): The robot deposits an object into an open drawer (on either the top or bottom tier) and subsequently closes the drawer using its end-effector. This task evaluates the coordination between free-space object manipulation and articulated mechanism actuation. 
\item 3) \textbf{Pot Loading} (3 stages, 3 variants): The robot lifts and removes the pot lid, deposits an ingredient into the pot, and places the lid back to cover it. This task evaluates the policy's capability to manipulate unattached constraints (the lid) and coordinate sequential interactions within a confined space. 
\item 4) \textbf{Tower Stacking} (3 stages, 3 variants): The robot sequentially stacks three objects of varying shapes and decreasing sizes. This task requires high vertical alignment accuracy and stable object release. 
\item 5) \textbf{Cup Matching} (4 stages, 4 variants): The robot relocates three colored cups onto their corresponding saucers. This task requires strict adherence to color matching, focusing on ordered repetitive operations and precise placement. 
\item 6) \textbf{Object Sorting} (4 stages, 3 variants): The robot sequentially sorts four items into two bins based on attributes. This task involves repeated pick-and-place operations, serving as a benchmark for testing long-horizon execution stability. \end{itemize}

\textbf{Evaluation Metric.}
To quantify performance, we employ a graded scoring system for each stage: $0.5$ points for contacting the target (demonstrating intent) and $1.0$ point for successful completion. 
We compute the average completion rates over 15 independent trials per task. Detailed results are presented in Table~\ref{tab:real_world_results}.

\section{Ablation Study Results}
\label{ablation}

This section presents a comprehensive ablation analysis of \textbf{FocalPolicy}, accompanied by detailed experimental results.
Consistent with the experimental setup in the main manuscript, our comprehensive evaluation spans a total of \textbf{24 simulation tasks}. 
The benchmarks encompass \textbf{Adroit} (denoted as \textbf{A}) and \textbf{MetaWorld} tasks, where the latter are categorized into three difficulty levels based on their complexity: \textbf{Medium (M)}, \textbf{Hard (H)}, and \textbf{Very Hard (VH)}.

The analysis is organized into three parts:
First, Sec.~\ref{subsec:appendix_component_ablations} elaborates on the \textbf{Component Ablations}, corresponding to the \textit{Component Effectiveness} analysis in the main text. Notably, to gain deeper insights into the contributions of various design choices, we extend our analysis beyond the main manuscript by investigating two supplementary variants: \textbf{FCO-Full-Macro} and \textbf{FCO-Prox-Freq}.
Subsequently, Sec.~\ref{subsec:appendix_horizon_ablations} presents the \textbf{Horizon Ablation}, providing specific configurations and detailed results corresponding to the \textit{Impact of Horizon Hyperparameters} analysis.
Third, Appendix~\ref{subsec:appendix_lambda_ablations} conducts a \textbf{Sensitivity Analysis of $\lambda$}, empirically justifying the choice of the balancing coefficient within the Foresight Composite Objective.
Finally, Appendix~\ref{subsec:appendix_spectral_ablations} presents the \textbf{Ablation on Spectral Components}, investigating the individual impacts of low-frequency and high-frequency components on modeling long-horizon manipulation trajectories.

\subsection{Component Ablations}
\label{subsec:appendix_component_ablations}

To isolate the contributions of the proposed \textbf{Foresight Composite Objective (FCO)} and \textbf{Locally Anchored Sampling (LAS)}, and to examine the sensitivity of specific design choices, we evaluate a comprehensive set of variants. 
Unless otherwise specified, all variants maintain the same macro-trajectory horizons ($L, H$) and training hyperparameters as the original \textbf{FocalPolicy} to ensure a controlled comparison. We report success rates (mean $\pm$ std) over three random seeds.

\paragraph{Ablation Configurations.}
We define the specific implementation details of the compared variants as follows:
\begin{itemize}
    \item \textbf{w/o LAS}: Replaces the locally anchored sampling with standard uniform sampling (as in FlowPolicy), testing the benefit of the biased sampling strategy.
    \item \textbf{w/o FCO \& LAS}: Removes both LAS and FCO, yielding a baseline similar to FlowPolicy but using our horizon configuration $(L, H)$.
    \item \textbf{$\mathcal{L}_{\text{time}}$-Only FCO}: Replaces the frequency-domain structural regularization in FCO with a standard time-domain MSE loss acting on the entire macro-trajectory.
    \item \textbf{$\mathcal{L}_{\text{freq}}$-Only FCO}: Trains the policy using solely the frequency-domain loss, removing the temporal alignment constraint on the proximal chunk.
    \item \textbf{Fixed-$r$ LAS}: A deterministic version of LAS where the anchor time $r$ is fixed to the terminal boundary ($r=1$). This variant tests the necessity of stochasticity in anchor sampling.
    \item \textbf{FCO-Full-Macro}: Extends the time-domain loss $\mathcal{L}_{\text{time}}$ to the entire macro-trajectory, aligning its horizon with $\mathcal{L}_{\text{freq}}$.
    \item \textbf{FCO-Prox-Freq}: Reverses our primary design by constraining the frequency-domain loss $\mathcal{L}_{\text{freq}}$ to the proximal chunk ($H$), while applying the time-domain loss $\mathcal{L}_{\text{time}}$ to the full macro-trajectory ($L$).
\end{itemize}

\begin{table*}[t]
\caption{Ablation study of component effectiveness on \textbf{24} tasks. We report the average success rate (\%)  across tasks.}
\label{ablation_average_results}
\begin{center}
\begin{tabular*}{0.74\linewidth}{l|cccc|c}
\toprule
 Designs & A(3) & M(11) & H(5) & VH(5) & \textbf{Average} \\
\midrule
\textbf{Ours} & $\textbf{76.9} \pm \textbf{2.2}$ & $\textbf{81.9} \pm \textbf{1.6}$ & $\textbf{62.2} \pm \textbf{4.6}$ & $\textbf{85.1} \pm \textbf{1.6}$ & $\textbf{77.9}  $\\

\textbf{w/o LAS}                             & $71.7 \pm 1.8$ & $65.6 \pm 2.9$ & $47.8 \pm 1.8$ & $73.3 \pm 1.1$ & $64.3$\\
\textbf{w/o FCO\&LAS}                        & $46.2 \pm 12.1$& $56.5 \pm 3.2$ & $43.1 \pm 0.2$ & $64.7 \pm 2.4$ & $54.1$\\
\midrule
\textbf{$\mathcal{L}_{\text{time}}$-Only FCO}& $34.3 \pm 1.2$  & $65.9 \pm 4.8$ & $49.4 \pm 2.1$ & $78.1 \pm 1.7$ & $61.1 $\\

\textbf{$\mathcal{L}_{\text{freq}}$-Only FCO}& $74.0 \pm 1.0$ & $74.4 \pm 1.9$ & $60.1 \pm 1.3$ & $81.4 \pm 1.8$ & $72.8$\\

\textbf{Fixed-$r$ LAS}                       & $70.4 \pm 2.2$ & $77.4 \pm 2.0$ & $59.0 \pm 3.7$ & $80.2 \pm 1.7$ & $73.3$\\

\textbf{FCO-Full-Macro}                      & $73.8 \pm 0.5$ & $75.4 \pm 1.0$ & $60.2 \pm 1.2$ & $81.9 \pm 1.6$ & 73.4\\

\textbf{FCO-Prox-Freq}                       & $72.8 \pm 2.5$ & $75.4 \pm 2.1$ & $60.7 \pm 1.0$ & $83.0 \pm 2.7$ & 73.6\\

\bottomrule 
\end{tabular*}
\end{center}

\end{table*}

\paragraph{Effectiveness of Key Modules.}
As shown in Table~\ref{ablation_average_results}, \textbf{FocalPolicy} achieves the strongest and most consistent performance across the Adroit and Meta-World task groups. 
Removing \textbf{LAS} (\textbf{w/o LAS}) consistently degrades performance, indicating that locally anchored sampling provides a more effective training signal than uniform time sampling for learning the target distribution. 
When both modules are removed (\textbf{w/o FCO\&LAS}), performance drops further, suggesting that \textbf{FCO} and \textbf{LAS} are complementary: FCO improves trajectory-level coherence via multi-chunk structural regularization, while LAS stabilizes and accelerates optimization by strengthening target-signal propagation.

\paragraph{Analysis of Objective Designs.}
We further analyze the impact of different loss formulations based on the results in Table~\ref{ablation_average_results}:
\begin{itemize}
    \item \textbf{Time vs. Frequency Supervision:} \textbf{$\mathcal{L}_{\text{time}}$-Only FCO} exhibits the most significant performance degradation, particularly on Adroit where the average success rate plummets from 76.9\% to 34.3\%. This highlights that vanilla temporal alignment is insufficient for capturing the overall motion trend, whereas spectral-domain supervision provides a more structured signal for global trajectory modeling. Conversely, \textbf{$\mathcal{L}_{\text{freq}}$-Only FCO} also leads to suboptimal results (e.g., 74.0\% on Adroit and 74.4\% on Meta-World), indicating that frequency alignment alone lacks the precision required for accurate proximal execution.
    
    \item \textbf{Horizon Alignment Strategy:} \textbf{FCO-Full-Macro} performs generally weaker than FocalPolicy, supporting our design choice of applying the time-domain loss only to the proximal chunk for precision, while using the frequency-domain loss for global trends. Furthermore, the performance drop in \textbf{FCO-Prox-Freq} confirms that the primary strength of our DCT-based loss lies in enforcing global structural consistency over the long horizon, rather than serving as a tool for local proximal alignment.
    
    \item \textbf{Stochasticity in LAS:} The performance of \textbf{Fixed-$r$ LAS} is typically inferior to the stochastic version, suggesting that retaining randomness in the anchor time $r$ (while keeping it biased towards the terminal) improves training stability and robustness.
\end{itemize}

Detailed experimental results for all individual tasks are provided in Table~\ref{tab:ablation_full_results}.

\begin{table*}[t]
\centering
\caption{Ablation study on the balancing coefficient $\lambda$ across six validation tasks. Results are reported as success rates (0--1).}
\label{tab:ablation_lambda_results}
\resizebox{\textwidth}{!}{
\begin{tabular}{l ccccccc c}
\toprule
$\lambda$ & Bin-Picking & Peg-Insert-Side & Push-Wall & Sweep-Into & Hand-Insert & Shelf-Place & Disassemble & \textbf{Average $\uparrow$} \\
\midrule
$1\cdot 10^{-3}$          & 0.21 & 0.76 & 0.75 & 0.83 & 0.38 & 0.70 & 0.78 & 0.63 \\
$5\cdot 10^{-4}$   & 0.24 & 0.81 & 0.80 & 0.87 & 0.46 & 0.69 & 0.81 & 0.67 \\
$\mathbf{1\cdot 10^{-4}}$ & \textbf{0.41} & \textbf{0.84} & \textbf{0.84} & \textbf{0.88} & \textbf{0.48} & \textbf{0.75} & \textbf{0.85} & \textbf{0.72} \\
$5\cdot 10^{-5}$   & 0.39 & 0.82 & 0.81 & 0.85 & 0.37 & 0.65 & 0.85 & 0.68 \\
$1\cdot 10^{-5}$          & 0.15 & 0.74 & 0.38 & 0.24 & 0.10 & 0.54 & 0.76 & 0.42 \\
\bottomrule
\end{tabular}
}
\end{table*}
\begin{table*}[t]
\centering
\caption{Ablation study on the impact of low-frequency (LF) and high-frequency (HF) components across four MetaWorld tasks. Results are reported as success rates (0--1).}
\label{tab:ablation_spectral_results}

\begin{tabular}{l ccccc}
\toprule
Variant & Sweep-Into & Hand-Insert & Pick-Place & Disassemble & \textbf{Average $\uparrow$} \\
\midrule
Ours w. Only LF & 0.56 & 0.25 & 0.70 & 0.78 & 0.57 \\
Ours w. Only HF & 0.41 & 0.23 & 0.66 & 0.82 & 0.53 \\
\textbf{Ours}   & \textbf{0.71} & \textbf{0.29} & \textbf{0.71} & \textbf{0.87} & \textbf{0.65} \\
\bottomrule
\end{tabular}

\end{table*}

\subsection{Horizon Ablation.}
\label{subsec:appendix_horizon_ablations}
Table~\ref{tab:ablation_horizon_results} studies the effect of chunk size $H$ and macro multiplier $N$ (i.e., macro-horizon $L=NH$) on Adroit and Meta-World tasks (excluding Easy tasks).
Across configurations, moderate macro-horizons (e.g., $H{=}4$, $N{=}3$ in our default setting) tend to offer the best trade-off: increasing $N$ enlarges the foresight window and can improve global structure, but overly large $N$ makes the macro-trajectory distribution harder to model and may reduce success rates.
Similarly, larger $H$ increases per-chunk prediction difficulty and can hurt performance on tasks that require fine-grained control. These results motivate our default choice of $H{=}4$ and $N{=}3$, which provides robust performance across benchmarks while keeping inference efficient.

\subsection{Sensitivity Analysis of $\lambda$.}
\label{subsec:appendix_lambda_ablations}
The hyperparameter $\lambda$ in Eq.~\eqref{4.10} balances the proximal time-domain consistency $\mathcal{L}_{\text{time}}$ and the foresight frequency-domain regularization $\mathcal{L}_{\text{freq}}$. As shown in Table~\ref{tab:ablation_lambda_results}, we evaluate the sensitivity of FocalPolicy to $\lambda$ by sweeping across five candidates: $\{1\cdot 10^{-3}, 5\cdot 10^{-4}, 1\cdot 10^{-4}, 5\cdot 10^{-5}, 1\cdot 10^{-5}\}$ on seven representative validation tasks. 

We observe that $\lambda = 10^{-4}$ consistently achieves the highest average success rate of $0.72$. The relatively small magnitude of $\lambda$ is primarily due to the scale difference between the two objectives: $\mathcal{L}_{\text{freq}}$ aggregates spectral information over the entire macro-trajectory $L$, leading to naturally larger numerical gradients compared to the single-chunk $\mathcal{L}_{\text{time}}$. Increasing $\lambda$ to $10^{-3}$ tends to over-regularize the policy, potentially sacrificing the fine-grained precision of proximal actions, while decreasing it to $10^{-5}$ diminishes the foresight-aware structural constraint, leading to a drop in cross-chunk coherence. These empirical results justify our choice of $\lambda = 10^{-4}$ as a robust balancing coefficient.

\subsection{Ablation on Spectral Components.}
\label{subsec:appendix_spectral_ablations}
As discussed in Section 4.1, we posit that low-frequency (LF) components encode the global motion trend of the trajectory, while high-frequency (HF) components represent fine-grained execution details. To empirically validate the necessity of utilizing the full frequency spectrum, we conduct an ablation study isolating the impact of these components. 

Table~\ref{tab:ablation_spectral_results} compares the performance of the full FocalPolicy against two degraded variants: one supervised exclusively by the low-frequency component (\textbf{Ours w. Only LF}) and another exclusively by the high-frequency component (\textbf{Ours w. Only HF}). We evaluate these variants across four representative MetaWorld tasks. 

As shown in the results, omitting either spectral component leads to noticeable performance degradation. Specifically, relying solely on LF or HF components results in an average success rate drop of $0.08$ and $0.12$, respectively, compared to our full model ($0.65$). These findings support our hypothesis that the frequency-domain supervision in FocalPolicy effectively captures holistic trajectory patterns by balancing common structural trends (LF) and specific local behaviors (HF). By integrating both, our method provides a comprehensive characterization of global trajectory features across multiple chunks. Exploring more sophisticated mechanisms to dynamically weight or combine these two spectral components remains an interesting direction for future work.


\begin{table*}[htbp]
\centering
\caption{Complete simulation results across all tasks in Adroit and MetaWorld. Values represent the success rate (mean $\pm$ standard deviation) over multiple seeds.}
\label{tab:full_appendix_results}
\footnotesize
\setlength{\tabcolsep}{2.5pt}
\renewcommand{\arraystretch}{1.05}

\resizebox{\textwidth}{!}{%
\begin{minipage}{\textwidth}

\begin{tabularx}{\textwidth}{l|YYY|YYYYY}
\toprule
\multirow{2}{*}{\textbf{Alg $\backslash$ Task}} & \multicolumn{3}{c|}{\textbf{Adroit}} & \multicolumn{5}{c}{\textbf{MetaWorld (Easy)}} \\
 & Hammer & Door & Pen & Button Press & Button Press Topdown & Button Press Topdown Wall & Button Press Wall & Coffee Button \\
\midrule
DP3                  & $98.3\pm2.9$  & $53.5\pm5.1$ & $59.3\pm3.5$ & $100.0\pm0.0$ & $99.7\pm0.6$  & $85.7\pm8.3$ & $100.0\pm0.0$ & $100.0\pm0.0$ \\
FlowPolicy           & $94.0\pm0.0$  & $53.0\pm6.2$ & $61.0\pm3.5$ & $100.0\pm0.0$ & $100.0\pm0.0$ & $100.0\pm0.0$ & $100.0\pm0.0$ & $100.0\pm0.0$ \\
FreqPolicy           & $94.3\pm2.1$  & $58.3\pm3.8$ & $53.0\pm3.5$ & $100.0\pm0.0$ & $69.3\pm7.6$  & $58.3\pm2.1$ & $100.0\pm0.0$ & $100.0\pm0.0$ \\
\rowcolor{gray!15}
\textbf{FocalPolicy} & $100.0\pm0.0$ & $63.2\pm5.3$ & $67.5\pm1.3$ & $100.0\pm0.0$ & $100.0\pm0.0$ & $100.0\pm0.0$ & $100.0\pm0.0$ & $100.0\pm0.0$ \\
\bottomrule
\end{tabularx}

\vspace{0.35em} 

\begin{tabularx}{\textwidth}{l|YYYYYYYY}
\toprule
\multirow{2}{*}{\textbf{Alg $\backslash$ Task}} & \multicolumn{8}{c}{\textbf{MetaWorld (Easy)}} \\
& Dial Turn  & Door Close & Door Lock & Door Open & Door Unlock & Drawer Close & Drawer Open & Faucet Close\\
\midrule
DP3                 & $65.7\pm2.3$ & $100.0\pm0.0$ & $97.7\pm0.6$ & $100.0\pm0.0$ & $100.0\pm0.0$ & $100.0\pm0.0$ & $100.0\pm0.0$ & $96.0\pm2.6$ \\
FlowPolicy          & $87.0\pm1.0$ & $100.0\pm0.0$ & $100.0\pm0.0$& $100.0\pm0.0$ & $100.0\pm0.0$ & $100.0\pm0.0$ & $100.0\pm0.0$ & $100.0\pm0.0$\\
FreqPolicy          & $65.7\pm4.2$ & $100.0\pm0.0$ & $92.0\pm6.2$ & $100.0\pm0.0$ & $100.0\pm0.0$ & $100.0\pm0.0$ & $98.0\pm1.7$  & $92.7\pm1.2$\\
\rowcolor{gray!15}
\textbf{FocalPolicy}& $86.0\pm1.7$ & $100.0\pm0.0$ & $100.0\pm0.0$& $100.0\pm0.0$ & $100.0\pm0.0$ & $100.0\pm0.0$ & $100.0\pm0.0$ & $100.0\pm0.0$\\
\bottomrule
\end{tabularx}

\vspace{0.35em}

\begin{tabularx}{\textwidth}{l|YYYYYYYY}
\toprule
\multirow{2}{*}{\textbf{Alg $\backslash$ Task}} & \multicolumn{8}{c}{\textbf{MetaWorld (Easy)}} \\
 & Faucet Open & Handle Press & Handle Pull & Handle Press Side & Handle Pull Side & Lever Pull & Plate Slide & Plate Slide Back\\
\midrule
DP3                 &$100.0\pm0.0$ & $82.7\pm1.5$  & $41.0\pm9.5$  & $100.0\pm0.0$ & $89.3\pm2.1$ & $76.0\pm5.0$ & $100.0\pm0.0$ & $100.0\pm0.0$ \\
FlowPolicy          &$100.0\pm0.0$ & $100.0\pm0.0$ & $48.3\pm21.1$ & $100.0\pm0.0$ & $58.0\pm5.0$ & $79.7\pm2.3$ & $100.0\pm0.0$ & $100.0\pm0.0$ \\
FreqPolicy          &$100.0\pm0.0$ & $85.0\pm1.0$  & $40.7\pm10.0$ & $100.0\pm0.0$ & $68.3\pm4.7$ & $69.0\pm5.3$ & $100.0\pm0.0$ & $100.0\pm0.0$ \\
\rowcolor{gray!15}
\textbf{FocalPolicy}&$100.0\pm0.0$ & $100.0\pm0.0$ & $41.3\pm14.4$ & $100.0\pm0.0$ & $56.0\pm6.2$ & $83.3\pm3.1$ & $100.0\pm0.0$ & $100.0\pm0.0$ \\
\bottomrule
\end{tabularx}

\vspace{0.35em}
\begin{tabularx}{\textwidth}{l|YYYYYYY}
\toprule
\multirow{2}{*}{\textbf{Alg $\backslash$ Task}} & \multicolumn{7}{c}{\textbf{MetaWorld (Easy)}} \\
 & Plate Slide Back Side & Plate Slide Side & Reach & Reach Wall & Window Close & Window Open & Peg Unplug Side\\
\midrule
DP3                 & $100.0\pm0.0$ & $100.0\pm0.0$ & $22.7\pm2.1$ & $66.0\pm5.3$ & $100.0\pm0.0$ & $100.0\pm0.0$ & $79.0\pm3.0$ \\
FlowPolicy          & $100.0\pm0.0$ & $100.0\pm0.0$ & $27.0\pm6.1$ & $74.7\pm5.0$ & $100.0\pm0.0$ & $100.0\pm0.0$ & $77.3\pm2.1$ \\
FreqPolicy          & $100.0\pm0.0$ & $100.0\pm0.0$ & $22.3\pm3.5$ & $74.3\pm2.5$ & $100.0\pm0.0$ & $93.3\pm0.6$  & $62.0\pm3.6$ \\
\rowcolor{gray!15}
\textbf{FocalPolicy}& $100.0\pm0.0$ & $100.0\pm0.0$ & $24.3\pm2.9$ & $86.3\pm2.5$ & $100.0\pm0.0$ & $100.0\pm0.0$ & $82.3\pm3.1$ \\
\bottomrule
\end{tabularx}

\vspace{0.35em}
\begin{tabularx}{\textwidth}{l|YYYYYYY}
\toprule
\multirow{2}{*}{\textbf{Alg $\backslash$ Task}} & \multicolumn{7}{c}{\textbf{MetaWorld (Medium)}}\\
 & Basketball & Bin Picking & Box Close & Coffee Pull & Coffee Push & Hammer & Peg Insert Side\\
\midrule
DP3             & $100.0\pm0.0$   & $65.3\pm14.2$ & $59.7\pm3.2$ & $88.3\pm2.9$ & $91.3\pm6.1$ & $100.0\pm0.0$  & $78.7\pm9.5$ \\
FlowPolicy      & $88.3\pm9.8$    & $49.7\pm24.8$ & $54.3\pm6.0$ & $95.3\pm3.2$ & $93.0\pm2.6$ & $99.0\pm1.7$   & $83.0\pm6.1$ \\
FreqPolicy      & $99.0\pm1.7$    & $28.7\pm12.4$ & $48.0\pm6.1$ & $82.0\pm7.0$ & $87.3\pm6.7$ & $100.0\pm0.0$  & $63.7\pm9.6$ \\
\rowcolor{gray!15}
\textbf{FocalPolicy}  & $100.0\pm0.0$   & $64.3\pm20.3$ & $61.3\pm1.5$ & $98.0\pm1.0$ & $96.7\pm1.5$ & $99.7\pm0.6$   & $88.3\pm4.5$ \\
\bottomrule
\end{tabularx}

\vspace{0.35em}
\begin{tabularx}{\textwidth}{l|YYYY|YYYY}
\toprule
\multirow{2}{*}{\textbf{Alg $\backslash$ Task}} & \multicolumn{4}{c|}{\textbf{MetaWorld (Medium)}} & \multicolumn{4}{c}{\textbf{MetaWorld (hard)}}\\
 & Push Wall & Soccer & Sweep & Sweep Into & Assembly & Hand Insert & Pick Out of Hole  & Pick Place  \\
\midrule
DP3             & $95.3\pm4.7$ & $31.7\pm8.4$ & $99.3\pm1.2$  & $39.3\pm25.3$ & $98.0\pm1.7$ & $14.3\pm2.9$  & $38.3\pm0.6$ & $60.7\pm11.9$ \\
FlowPolicy      & $49.0\pm3.6$ & $40.3\pm8.1$ & $100.0\pm0.0$ & $37.3\pm31.8$ & $98.3\pm1.5$ & $16.0\pm2.6$  & $21.0\pm4.6$ & $55.3\pm2.3$ \\
FreqPolicy      & $89.7\pm5.5$ & $24.3\pm1.5$ & $90.3\pm3.1$  & $19.3\pm8.5$  & $98.0\pm1.0$ & $16.0\pm3.0$  & $40.0\pm2.0$ & $57.7\pm4.2$ \\
\rowcolor{gray!15}
\textbf{FocalPolicy}  & $81.0\pm6.6$ & $41.0\pm7.5$ & $100.0\pm0.0$ & $70.7\pm22.0$ & $99.7\pm0.6$ & $29.3\pm16.2$ & $35.0\pm7.2$ & $70.7\pm4.2$ \\
\bottomrule
\end{tabularx}

\vspace{0.35em}
\begin{tabularx}{\textwidth}{l|YY|YYYYY|Y}
\toprule
\multirow{2}{*}{\textbf{Alg $\backslash$ Task}} & \multicolumn{2}{c|}{\textbf{MetaWorld (Hard)}} & \multicolumn{5}{c|}{\textbf{MetaWorld (Very Hard)}} & \multirow{2}{*}{\textbf{Overall Avg}} \\
& Push & Push Back & Shelf Place & Disassemble & Stick Pull & Stick Push & Pick Place Wall &\\
\midrule
DP3            & $71.3\pm5.0$ & $0.0\pm0.0$ & $58.7\pm2.1$ & $79.3\pm7.0$ & $67.7\pm5.0$ & $100.0\pm0.0$ & $85.0\pm4.4$ & 79.9\\
FlowPolicy     & $80.3\pm2.5$ & $0.0\pm0.0$ & $62.3\pm6.1$ & $71.0\pm7.8$ & $78.3\pm2.1$ & $100.0\pm0.0$ & $77.7\pm3.2$ & 79.4\\
FreqPolicy     & $66.7\pm10.7$ & $0.0\pm0.0$ & $39.3\pm11.6$ & $85.3\pm1.5$ & $72.7\pm3.8$ & $100.0\pm0.0$ & $74.7\pm2.3$ & 75.1\\
\rowcolor{gray!15}
\textbf{FocalPolicy} & $76.3\pm4.7$ & $0.0\pm0.0$ & $69.3\pm4.9$ & $86.7\pm1.5$ & $80.7\pm1.5$ & $100.0\pm0.0$ & $89.0\pm4.4$ & 83.6\\
\bottomrule
\end{tabularx}

\end{minipage}%
} 
\end{table*}

\begin{table*}[t]
\centering
\caption{\textbf{Ablation results} on Adroit and MetaWorld (excluding Easy tasks). Values are success rate (\%, mean $\pm$ std) over multiple seeds.}
\label{tab:ablation_full_results}
\small
\setlength{\tabcolsep}{4pt}
\renewcommand{\arraystretch}{1.3}

\begin{tabularx}{\textwidth}{l|YYY}
\toprule
\multirow{2}{*}{\textbf{Variant $\backslash$ Task}} & \multicolumn{3}{c}{\textbf{Adroit}}\\
& {Hammer} & {Door} & {Pen} \\
\midrule
\rowcolor{gray!15}
\textbf{FocalPolicy}                               & $100.0\pm0.0$ & $63.2\pm5.3$ & $67.5\pm1.3$  \\
\textbf{w/o LAS}                             & $95.0\pm2.6$  & $55.0\pm2.6$ & $65.0\pm5.6$  \\
\textbf{w/o FCO\&LAS}                        & $58.0\pm8.7$  & $39.0\pm3.6$ & $41.7\pm29.4$ \\
\midrule
\textbf{$\mathcal{L}_{\text{time}}$-Only FCO}& $55.0\pm4.0$  & $7.0\pm1.7$  & $41.0\pm3.5$  \\
\textbf{$\mathcal{L}_{\text{freq}}$-Only FCO}& $97.0\pm3.0$  & $57.0\pm1.0$ & $68.0\pm3.6$  \\
\textbf{LAS $\rightarrow r=1$ }              & $97.3\pm4.6$  & $54.7\pm3.5$ & $59.3\pm4.0$  \\
\textbf{FCO-Full-Macro}                      & $97.7\pm1.5$  & $54.7\pm2.5$ & $69.0\pm2.0$  \\
\textbf{FCO-Prox-Freq}                       & $99.0\pm0.0$  & $56.3\pm3.5$ & $63.0\pm6.9$  \\
\bottomrule
\end{tabularx}

\vspace{0.35em}

\begin{tabularx}{\textwidth}{l|YYYYYYY}
\toprule
\multirow{2}{*}{\textbf{Variant $\backslash$ Task}} & \multicolumn{7}{c}{\textbf{MetaWorld (Medium)}}\\
& {Basketball} & {Bin Picking} & {Box Close} & {Coffee Pull} & {Coffee Push} & {Hammer} & {Peg Insert Side} \\
\midrule
\rowcolor{gray!15}
\textbf{FocalPolicy}                              & $100.0\pm0.0$ & $64.3\pm20.3$ & $61.3\pm1.5$  & $98.0\pm1.0$ & $96.7\pm1.5$ & $99.7\pm0.6$  & $88.3\pm4.5$\\
\textbf{w/o LAS}                            & $70.0\pm9.5$  & $37.0\pm18.0$ & $49.7\pm10.4$ & $91.7\pm2.9$ & $94.7\pm3.2$ & $99.7\pm0.6$  & $70.3\pm11.7$ \\
\textbf{w/o FCO\&LAS}                       & $56.3\pm20.6$ & $20.0\pm7.2$  & $38.0\pm10.0$ & $86.7\pm5.0$ & $86.0\pm2.6$ & $99.0\pm1.7$  & $54.3\pm7.5$ \\
\midrule
\textbf{$\mathcal{L}_{\text{time}}$-Only FCO}& $84.3\pm12.3$ & $50.0\pm20.8$ & $34.0\pm7.0$  & $93.3\pm5.7$ & $85.3\pm2.5$ & $98.7\pm1.5$ & $76.0\pm2.6$ \\
\textbf{$\mathcal{L}_{\text{freq}}$-Only FCO}& $94.3\pm9.8$  & $51.7\pm16.4$ & $49.0\pm3.0$  & $98.3\pm0.6$ & $96.0\pm3.5$ & $100.0\pm0.0$ & $76.3\pm3.5$ \\
\textbf{LAS $\rightarrow r=1$ }             & $100.0\pm0.0$ & $51.7\pm29.2$ & $56.7\pm7.5$  & $97.3\pm2.3$ & $94.0\pm5.0$ & $99.0\pm1.7$  & $82.3\pm6.8$ \\
\textbf{FCO-Full-Macro}                     & $99.0\pm1.7$  & $56.7\pm14.5$ & $47.3\pm3.8$  & $97.3\pm2.3$ & $94.7\pm2.9$ & $100.0\pm0.0$ & $73.3\pm6.4$ \\
\textbf{FCO-Prox-Freq}                      & $99.0\pm1.7$  & $54.7\pm16.4$ & $49.0\pm9.0$  & $96.7\pm0.6$ & $93.0\pm5.6$ & $100.0\pm0.0$ & $74.0\pm10.1$ \\
\bottomrule
\end{tabularx}

\vspace{0.35em}

\begin{tabularx}{\textwidth}{l|YYYY|YYY}
\toprule
\multirow{2}{*}{\textbf{Variant $\backslash$ Task}} &
\multicolumn{4}{c|}{\textbf{MetaWorld (Medium)}} &
\multicolumn{3}{c}{\textbf{MetaWorld (Hard)}} \\
& {Push Wall} & {Soccer} & {Sweep} & {Sweep Into}
& {Assembly} & {Hand Insert} & {Pick Out Hole} \\
\midrule
\rowcolor{gray!15}
\textbf{FocalPolicy}                              & $81.0\pm6.6$  & $41.0\pm7.5$  & $100.0\pm0.0$ & $70.7\pm22.0$ & $99.7\pm0.6$  & $29.3\pm16.2$ & $35.0\pm7.2$ \\
\textbf{w/o LAS}                            & $56.7\pm4.7$  & $44.7\pm12.7$ & $87.0\pm9.6$  & $20.3\pm1.5$  & $96.7\pm2.5$  & $11.0\pm1.0$  & $13.3\pm6.1$ \\
\textbf{w/o FCO\&LAS}                       & $43.3\pm10.1$ & $35.7\pm7.1$  & $82.7\pm4.7$  & $19.3\pm5.1$  & $94.3\pm3.5$  & $10.3\pm2.1$  & $13.3\pm4.5$ \\
\midrule
\textbf{$\mathcal{L}_{\text{time}}$-Only FCO}& $46.7\pm25.4$ & $38.0\pm8.7$  & $77.3\pm4.0$  & $41.7\pm30.9$ & $93.0\pm4.4$ & $23.0\pm13.9$ & $6.3\pm5.9$ \\
\textbf{$\mathcal{L}_{\text{freq}}$-Only FCO}& $51.7\pm12.3$ & $35.0\pm9.5$  & $95.3\pm3.1$  & $70.3\pm11.7$ & $100.0\pm0.0$ & $22.0\pm10.6$ & $33.3\pm2.1$ \\
\textbf{LAS $\rightarrow r=1$ }             & $68.0\pm4.4$  & $37.0\pm8.7$  & $98.3\pm2.9$  & $67.3\pm18.6$ & $99.0\pm1.7$  & $26.3\pm15.2$ & $28.7\pm3.1$ \\
\textbf{FCO-Full-Macro}                     & $66.3\pm4.0$  & $33.7\pm7.8$  & $96.7\pm2.9$  & $64.0\pm17.1$ & $99.7\pm0.6$  & $21.3\pm8.4$  & $34.0\pm7.0$ \\
\textbf{FCO-Prox-Freq}                      & $67.3\pm12.1$ & $35.7\pm13.7$ & $98.3\pm1.5$  & $61.3\pm21.0$ & $99.7\pm0.6$  & $22.0\pm8.7$  & $34.7\pm3.1$ \\
\bottomrule
\end{tabularx}

\vspace{0.35em}

\begin{tabularx}{\textwidth}{l|YY|YYYYY}
\toprule
\multirow{2}{*}{\textbf{Variant $\backslash$ Task}} &
\multicolumn{2}{c|}{\textbf{MetaWorld (Hard)}} &
\multicolumn{5}{c}{\textbf{MetaWorld (Very Hard)}} \\
& {Pick Place} & {Push} & {Shelf Place} & {Disassemble} & {Stick Pull} & {Stick Push} & {Pick Place Wall} \\
\midrule
\rowcolor{gray!15}
\textbf{FocalPolicy}                              & $70.7\pm4.2$ & $76.3\pm4.7$ & $69.3\pm4.9$ & $86.7\pm1.5$ & $80.7\pm1.5$ & $100.0\pm0.0$ & $89.0\pm4.4$ \\
\textbf{w/o LAS}                            & $63.0\pm5.3$ & $55.0\pm4.6$ & $52.3\pm5.5$ & $67.3\pm2.9$ & $68.3\pm4.0$ & $100.0\pm0.0$ & $78.7\pm4.5$ \\
\textbf{w/o FCO\&LAS}                       & $56.0\pm3.5$ & $41.7\pm3.5$ & $41.7\pm2.1$ & $60.7\pm4.5$ & $53.3\pm5.1$ & $99.7\pm0.6$  & $68.0\pm7.9$ \\
\midrule
\textbf{$\mathcal{L}_{\text{time}}$-Only FCO}& $67.3\pm7.6$ & $57.3\pm1.5$ & $57.0\pm3.0$ & $83.0\pm1.7$ & $79.7\pm2.5$ & $100.0\pm0.0$ & $70.7\pm10.0$\\
\textbf{$\mathcal{L}_{\text{freq}}$-Only FCO}& $67.0\pm9.5$ & $78.0\pm7.0$ & $67.7\pm6.1$ & $79.7\pm2.1$ & $77.7\pm2.5$ & $100.0\pm0.0$ & $82.0\pm9.0$ \\
\textbf{LAS $\rightarrow r=1$ }             & $67.0\pm6.6$ & $74.0\pm6.1$ & $56.3\pm4.5$ & $82.0\pm5.6$ & $76.7\pm3.2$ & $100.0\pm0.0$ & $86.0\pm10.4$ \\
\textbf{FCO-Full-Macro}                     & $68.3\pm7.0$ & $77.7\pm6.4$ & $69.0\pm7.2$ & $80.3\pm3.8$ & $76.7\pm4.7$ & $100.0\pm0.0$ & $83.7\pm6.0$ \\
\textbf{FCO-Prox-Freq}                      & $68.0\pm6.2$ & $79.0\pm1.7$ & $70.0\pm3.0$ & $80.3\pm2.5$ & $80.7\pm5.5$ & $100.0\pm0.0$ & $84.0\pm5.3$ \\

\bottomrule
\end{tabularx}

\end{table*}

\begin{table*}[t]
\centering
\caption{\textbf{Horizon ablation results} on Adroit and MetaWorld (excluding Easy tasks). Values are success rate (\%, mean $\pm$ std) over multiple seeds.}
\label{tab:ablation_horizon_results}

\scriptsize
\setlength{\tabcolsep}{2.5pt}
\renewcommand{\arraystretch}{1.2}

\resizebox{\textwidth}{!}{%
\begin{minipage}{\textwidth}

\begin{tabularx}{\textwidth}{lY|YYY}
\toprule
\multirow{2}{*}{\textbf{H}} & \multirow{2}{*}{\textbf{N}}  & \multicolumn{3}{c}{\textbf{Adroit}} \\
& & {Hammer} & {Door} & {Pen} \\
\midrule
\multirow{4}{*}{4}
 & 2          & $97.7\pm2.1$  & $59.7\pm6.7$ & $63.3\pm2.5$  \\
 & \textbf{3} & \cellcolor{gray!15}$100.0\pm0.0$ & \cellcolor{gray!15}$63.2\pm5.3$ & \cellcolor{gray!15}$67.5\pm1.3$ \\
 & 4          & $98.3\pm2.1$  & $61.3\pm5.9$ & $68.7\pm4.5$  \\
 & 5          & $95.0\pm4.4$  & $56.7\pm4.2$ & $59.7\pm9.1$  \\
\midrule
\multirow{4}{*}{8} 
 & 2          & $97.7\pm1.5$ & $61.3\pm0.6$ & $62.0\pm7.0$  \\
 & 3          & $97.3\pm2.5$ & $61.7\pm4.7$ & $64.3\pm4.5$  \\
 & 4          & $98.0\pm1.7$ & $63.0\pm6.0$ & $59.7\pm5.6$  \\
 & 5          & $93.7\pm1.5$ & $55.7\pm8.6$ & $68.3\pm6.1$  \\
\midrule
\multirow{4}{*}{12}
 & 2          & $97.3\pm3.1$ & $54.3\pm2.1$ & $66.3\pm1.5$  \\
 & 3          & $94.7\pm4.0$ & $56.7\pm2.3$ & $69.3\pm2.1$  \\
 & 4          & $93.3\pm4.7$ & $48.0\pm3.6$ & $67.7\pm2.1$  \\
 & 5          & $91.0\pm4.4$ & $52.0\pm0.0$ & $65.3\pm2.3$  \\
\bottomrule
\end{tabularx}

\vspace{0.35em}

\begin{tabularx}{\textwidth}{lY|YYYYYYY}
\toprule
\multirow{2}{*}{\textbf{H}} & \multirow{2}{*}{\textbf{N}} & \multicolumn{7}{c}{\textbf{MetaWorld (Medium)}}\\
& & {Basketball} & {Bin Picking} & {Box Close} & {Coffee Pull} & {Coffee Push} & {Hammer} & {Peg Insert Side} \\
\midrule
\multirow{4}{*}{4}
 & 2          & $99.0\pm1.7$  & $55.0\pm26.0$ & $57.7\pm11.9$ & $95.3\pm2.5$ & $92.3\pm3.2$ & $99.0\pm1.7$  & $84.0\pm7.0$ \\
 & \textbf{3} & \cellcolor{gray!15}$100.0\pm0.0$ & \cellcolor{gray!15}$64.3\pm20.3$ & \cellcolor{gray!15}$61.3\pm1.5$ & \cellcolor{gray!15}$98.0\pm1.0$ & \cellcolor{gray!15}$96.7\pm1.5$ & \cellcolor{gray!15}$99.7\pm0.6$ & \cellcolor{gray!15}$88.3\pm4.5$ \\
 & 4          & $99.7\pm0.6$  & $55.0\pm25.0$ & $57.0\pm2.6$  & $96.3\pm1.2$ & $93.3\pm3.2$ & $99.0\pm1.7$  & $85.0\pm4.4$ \\
 & 5          & $83.3\pm11.9$ & $15.3\pm7.5$  & $48.0\pm1.0$  & $90.0\pm3.5$ & $93.0\pm1.7$ & $99.0\pm1.0$  & $78.0\pm1.0$ \\
\midrule
\multirow{4}{*}{8} 
 & 2          & $99.0\pm1.7$  & $69.7\pm13.8$ & $53.7\pm9.5$ & $95.7\pm2.1$ & $92.7\pm2.5$ & $100.0\pm0.0$ & $80.7\pm11.0$ \\
 & 3          & $92.7\pm4.9$  & $40.7\pm36.7$ & $51.0\pm7.8$ & $95.3\pm1.5$ & $94.0\pm3.6$ & $99.0\pm1.7$  & $78.7\pm3.5$ \\
 & 4          & $100.0\pm0.0$ & $66.3\pm11.6$ & $49.7\pm7.4$ & $90.7\pm3.1$ & $93.3\pm5.5$ & $99.3\pm1.2$  & $80.7\pm9.5$ \\
 & 5          & $99.7\pm0.6$  & $57.7\pm23.2$ & $44.3\pm4.2$ & $92.7\pm0.6$ & $94.3\pm4.0$ & $100.0\pm0.0$ & $82.0\pm7.5$ \\
\midrule
\multirow{4}{*}{12}
 & 2          & $96.3\pm2.5$ & $63.7\pm14.6$ & $46.3\pm5.5$ & $91.3\pm1.2$ & $90.3\pm4.0$ & $100.0\pm0.0$ & $72.0\pm9.8$ \\
 & 3          & $99.7\pm0.6$ & $68.7\pm12.5$ & $49.7\pm2.1$ & $90.7\pm0.6$ & $91.3\pm6.4$ & $100.0\pm0.0$ & $74.0\pm7.2$ \\
 & 4          & $97.7\pm2.5$ & $60.7\pm14.3$ & $42.3\pm2.5$ & $88.7\pm3.5$ & $91.0\pm5.2$ & $100.0\pm0.0$ & $69.7\pm10.1$\\
 & 5          & $96.0\pm4.0$ & $61.3\pm12.9$ & $38.7\pm7.6$ & $88.7\pm2.5$ & $90.0\pm4.6$ & $100.0\pm0.0$ & $70.3\pm10.6$\\
\bottomrule
\end{tabularx}

\vspace{0.35em}

\begin{tabularx}{\textwidth}{lY|YYYY|YYY}
\toprule
\multirow{2}{*}{\textbf{H}} & \multirow{2}{*}{\textbf{N}} &
\multicolumn{4}{c|}{\textbf{MetaWorld (Medium)}} &
\multicolumn{3}{c}{\textbf{MetaWorld (Hard)}} \\
& & {Push Wall} & {Soccer} & {Sweep} & {Sweep Into} & {Assembly} & {Hand Insert} & {Pick Out Hole} \\
\midrule
\multirow{4}{*}{4}
 & 2          & $48.3\pm12.6$& $37.7\pm8.5$ & $98.7\pm1.2$ & $54.7\pm21.8$ & $98.0\pm3.5$  & $25.3\pm14.5$ & $37.7\pm3.8$ \\
 & \textbf{3} & \cellcolor{gray!15}$81.0\pm6.6$ & \cellcolor{gray!15}$41.0\pm7.5$ & \cellcolor{gray!15}$100.0\pm0.0$ & \cellcolor{gray!15}$70.7\pm22.0$ & \cellcolor{gray!15}$99.7\pm0.6$  & \cellcolor{gray!15}$29.3\pm16.2$ & \cellcolor{gray!15}$35.0\pm7.2$ \\
 & 4          & $74.0\pm9.8$ & $37.7\pm9.7$ & $99.7\pm0.6$ & $45.0\pm36.8$ & $98.3\pm2.1$  & $24.3\pm17.1$ & $29.7\pm4.5$ \\
 & 5          & $44.7\pm8.1$ & $44.0\pm11.5$& $96.7\pm3.2$ & $33.0\pm36.6$ & $93.7\pm2.1$  & $12.7\pm2.1$  & $16.7\pm6.5$ \\
\midrule
\multirow{4}{*}{8} 
 & 2          & $74.7\pm5.9$ & $40.7\pm9.0$ & $97.3\pm3.8$ & $60.0\pm13.2$ & $97.7\pm2.5$  & $21.7\pm9.0$ & $29.3\pm2.1$ \\
 & 3          & $54.0\pm7.9$ & $42.0\pm9.2$ & $94.7\pm0.6$ & $41.7\pm23.7$ & $99.3\pm1.2$  & $15.3\pm2.1$ & $20.0\pm4.6$ \\
 & 4          & $79.3\pm4.5$ & $37.3\pm11.0$& $98.0\pm2.6$ & $58.0\pm24.2$ & $98.3\pm1.5$  & $23.0\pm7.0$ & $33.3\pm3.1$ \\
 & 5          & $82.0\pm1.0$ & $37.3\pm7.8$ & $97.7\pm4.0$ & $41.7\pm38.2$ & $93.7\pm1.2$  & $24.0\pm10.6$& $32.0\pm6.0$ \\
\midrule
\multirow{4}{*}{12}
 & 2          & $76.0\pm6.2$  & $35.7\pm11.1$& $91.7\pm3.5$ & $49.3\pm21.4$ & $99.3\pm1.2$ & $16.7\pm5.5$ & $35.0\pm2.6$ \\
 & 3          & $79.3\pm2.9$  & $36.0\pm5.6$ & $93.3\pm2.3$ & $61.3\pm25.3$ & $98.7\pm1.5$ & $16.7\pm9.0$ & $38.0\pm5.2$ \\
 & 4          & $77.0\pm11.3$ & $34.7\pm9.5$ & $91.0\pm3.6$ & $49.3\pm24.8$ & $99.0\pm1.7$ & $14.3\pm2.1$ & $33.7\pm6.4$ \\
 & 5          & $72.7\pm15.0$ & $35.7\pm8.1$ & $91.0\pm5.6$ & $45.3\pm30.0$ & $99.3\pm1.2$ & $18.0\pm7.9$ & $38.7\pm1.2$ \\
\bottomrule
\end{tabularx}

\vspace{0.35em}

\begin{tabularx}{\textwidth}{lY|YY|YYYYY}
\toprule
\multirow{2}{*}{\textbf{H}} & \multirow{2}{*}{\textbf{N}} &
\multicolumn{2}{c|}{\textbf{MetaWorld (Hard)}} &
\multicolumn{5}{c}{\textbf{MetaWorld (Very Hard)}}\\
& & {Pick Place} & {Push} & {Shelf Place} & {Disassemble} & {Stick Pull} & {Stick Push} & {Pick Place Wall} \\
\midrule
\multirow{4}{*}{4}
 & 2          & $60.0\pm1.0$ & $74.3\pm8.1$ & $68.3\pm4.5$ & $80.3\pm3.8$ & $78.7\pm4.2$ & $100.0\pm0.0$ & $84.3\pm3.1$  \\
 & \textbf{3} & \cellcolor{gray!15}$70.7\pm4.2$ & \cellcolor{gray!15}$76.3\pm4.7$ & \cellcolor{gray!15}$69.3\pm4.9$ & \cellcolor{gray!15}$86.7\pm1.5$ & \cellcolor{gray!15}$80.7\pm1.5$ & \cellcolor{gray!15}$100.0\pm0.0$ & \cellcolor{gray!15}$89.0\pm4.4$  \\
 & 4          & $64.3\pm5.0$ & $76.7\pm3.1$ & $67.3\pm6.5$ & $82.7\pm3.1$ & $78.0\pm6.6$ & $100.0\pm0.0$ & $82.3\pm8.6$  \\
 & 5          & $56.0\pm1.0$ & $68.7\pm4.5$ & $49.0\pm2.6$ & $71.3\pm5.1$ & $72.3\pm2.1$ & $100.0\pm0.0$ & $74.3\pm2.5$  \\
\midrule
\multirow{4}{*}{8} 
 & 2          & $72.3\pm3.8$ & $76.0\pm1.7$ & $60.7\pm5.1$ & $79.7\pm3.8$ & $75.7\pm3.1$ & $100.0\pm0.0$ & $85.7\pm4.7$  \\
 & 3          & $70.3\pm3.8$ & $80.3\pm1.5$ & $58.3\pm4.9$ & $78.7\pm5.7$ & $73.3\pm4.7$ & $100.0\pm0.0$ & $84.3\pm3.2$  \\
 & 4          & $68.0\pm5.0$ & $77.3\pm4.0$ & $61.3\pm4.5$ & $85.0\pm3.6$ & $75.7\pm2.1$ & $100.0\pm0.0$ & $88.0\pm1.7$  \\
 & 5          & $65.7\pm5.9$ & $77.7\pm2.9$ & $60.3\pm8.5$ & $81.7\pm2.5$ & $73.0\pm4.6$ & $100.0\pm0.0$ & $85.0\pm2.6$  \\
\midrule
\multirow{4}{*}{12}
 & 2          & $75.3\pm6.4$ & $80.7\pm4.2$ & $58.3\pm4.9$ & $82.7\pm1.5$ & $72.0\pm6.1$ & $100.0\pm0.0$ & $83.3\pm4.9$  \\
 & 3          & $73.0\pm6.2$ & $81.7\pm5.8$ & $57.7\pm3.8$ & $84.0\pm3.6$ & $72.7\pm3.1$ & $100.0\pm0.0$ & $82.3\pm4.5$  \\
 & 4          & $71.0\pm5.0$ & $81.0\pm5.3$ & $56.0\pm7.9$ & $81.7\pm4.0$ & $67.3\pm3.5$ & $100.0\pm0.0$ & $81.3\pm3.2$  \\
 & 5          & $71.7\pm5.8$ & $79.0\pm5.2$ & $52.3\pm11.6$& $76.7\pm4.5$ & $68.3\pm1.5$ & $100.0\pm0.0$ & $84.0\pm3.5$  \\
\bottomrule
\end{tabularx}

\end{minipage}%
} 
\end{table*}


\end{document}